\def\eqref#1{equation~\ref{#1}}
\def\1{\bm{1}}
\DeclareMathAlphabet{\mathsfit}{\encodingdefault}{\sfdefault}{m}{sl}
\SetMathAlphabet{\mathsfit}{bold}{\encodingdefault}{\sfdefault}{bx}{n}
\newcommand{\R}{\mathbb{R}}
\newtheorem{theorem}{Theorem}
\newtheorem{definition}{Definition}
\newtheorem{example}{Example}
\newtheorem*{example*}{Example}
\newtheorem{lemma}{Lemma}
\newtheorem{prop}{Proposition}
\newtheorem*{fact*}{Fact}
\newtheorem*{summary*}{Prior Work}
\newtheorem{corollary}{Corollary}
\theoremstyle{remark}
\newtheorem{remark}{Remark}
\newcommand{\rev}[1]{{\color{black}#1}}
\def\namedlabel#1#2{\begingroup
   \def\@currentlabel{#2}%
   \label{#1}\endgroup
}
\DeclareMathOperator{\Sp}{Span}
\DeclareMathOperator{\id}{id}
\DeclareMathOperator{\Stab}{Stab}
\DeclareMathOperator{\Aff}{Aff}
\DeclareMathOperator{\Hom}{Hom}
\DeclareMathOperator{\cN}{\mathcal{N}_\sigma}
\DeclareMathOperator{\cF}{\mathcal{F}}
\DeclareMathOperator{\cC}{\mathcal{C}}
\DeclareMathOperator{\cU}{\mathcal{U}_\sigma}
\DeclareMathOperator{\N}{\mathbb{N}}
\DeclareMathOperator{\ccat}{\hat \circ}
\title{On Universality of Deep Equivariant Networks}
\author{Marco Pacini\thanks{University of Trento; Fondazione Bruno Kessler. \texttt{mpacini@fbk.eu}.
} \quad Mircea Petrache\thanks{PUC Chile. \texttt{mpetrache@uc.cl}.} \quad Bruno Lepri\thanks{Fondazione Bruno Kessler. \texttt{lepri@fbk.eu}} \quad Shubhendu Trivedi\thanks{\texttt{shubhendu@csail.mit.edu}} \quad  Robin Walters \thanks{Northeastern University. \texttt{r.walters@northeastern.edu}}
}
\begin{document}

\maketitle

\begin{abstract}
    Universality results for equivariant neural networks remain rare.
    Those that do exist typically hold only in restrictive settings: either they rely on regular or higher-order tensor representations, leading to impractically high-dimensional hidden spaces, or they target specialized architectures, often confined to the invariant setting. 
    This work develops a more general account. 
    For invariant networks, we establish a universality theorem under separation constraints, showing that the addition of a fully connected readout layer secures approximation within the class of separation-constrained continuous functions. 
    For equivariant networks, where results are even scarcer, we demonstrate that standard separability notions are inadequate and introduce the sharper criterion of \emph{entry-wise separability}.
    We show that with sufficient depth or with the addition of appropriate readout layers, equivariant networks attain universality within the entry-wise separable regime.
    Together with prior results showing the failure of universality for shallow models,
    our findings identify depth and readout layers as a decisive mechanism for universality, additionally offering a unified perspective that subsumes and extends earlier specialized results.    
\end{abstract}

\section{Introduction}
\label{section:intro}

Symmetry has emerged as a key organizing principle in deep learning.  
Equivariant neural networks encode symmetry by ensuring that transformations of the input are mirrored by corresponding transformations of the output.  
This inductive bias has proven successful in applications ranging from vision and molecular modeling to representation learning on graphs and manifolds \citep{cohen_group_2016,kondor_generalization_2018,bronstein_geometric_2021}. 

An important concern, however, is whether the introduced inductive biases may impose additional undesired constraints beyond symmetry. 
In this direction, the majority of work focuses on the study of \emph{expressivity}, a broad notion that intuitively reflects the capacity of a family of models to represent or approximate arbitrarily complex target functions. 
This notion admits different formalizations, but two main approaches are currently investigated in the literature. 
The first directly tackles \emph{universality}, understood here as the problem of approximating all symmetry-compatible target functions~\citep{ravanbakhsh_universal_2020,keriven_universal_2019,maron_universality_2019, sonoda_universality_2022}, which, however, often requires models with impractically large intermediate representations.  
The second approach concerns the ability of models to distinguish input pairs, their \emph{separation power}, an ability that also constrains the functions they can approximate. 
Separation has been extensively studied in the graph learning community through the lens of the Weisfeiler--Leman test~\citep{morris_weisfeiler_2019,maron_provably_2019}, and more recently in a general equivariant setting~\citep{joshi_expressive_2023,pacini_separation_2025}. 
In particular, \citet{chen_equivalence_2019} and \citet{joshi_expressive_2023} present initial approaches to universality that explicitly account for separation constraints. 
They establish universality results up to Weisfeiler--Leman and orbit separation, respectively, thereby furnishing the first cases of \emph{separation-constrained universality}.

However, \citet{pacini_universality_2025} suggest a more nuanced landscape for the interaction between separation and universality. 
For instance, they present examples of invariant shallow architectures with the same separation power but different approximation power, showing that although separation is a \emph{necessary} condition for approximation, it may fail to be a \emph{sufficient} one. 
Nevertheless, \citet{zaheer_deep_2017}, \citet{qi_pointnet_2017}, and \citet{segol_universal_2020} show that adding fully connected readout layers or increasing the depth of this limited class of architectures transforms them into universal models up to separation. 
This suggests that depth and readout layers may play a crucial role in achieving separation-constrained universality and, more generally, in efficiently enhancing approximation power. 
In this paper, we shed light on this phenomenon by investigating the role of depth in separation-constrained universality, both in the invariant and equivariant regimes, and offer a unified framework that goes beyond earlier architecture-specific results.
Our first result is a \emph{separation-constrained universality theorem} for invariant networks, showing that models with fully connected readouts can approximate every continuous function consistent with their separation relation (Section~\ref{section:invariant-universality}).  
We then turn to the equivariant setting, where a simple example shows that standard separability is too coarse to characterize universality. 
To address this, we introduce the notion of \emph{entry-wise separability} (Section~\ref{section:entry-wise-separation}). 
Intuitively, instead of considering the separation relation of the entire function, we examine all the separation relations of its projections onto individual output coordinates simultaneously. 
With this notion in place, we prove two \emph{entry-wise separation-constrained universality theorems}. 
These results establish that deep equivariant networks achieve universality either when depth is sufficient to stabilize separation or when specific output layers act, in the equivariant setting, as surrogates of fully connected readouts in the invariant case (Section~\ref{section:equivariant-universality}).
In summary, our results identify depth and readouts as key factors for universality across broad classes of invariant and equivariant architectures. 
They clarify the role of separation in approximation and subsume earlier results restricted to shallow or architecture-specific settings. 

We summarize our main contributions below:
\begin{itemize}
    \item We establish a \emph{separation-constrained universality theorem} for invariant networks (Theorem~\ref{th:invariant_universality}), showing that the addition of a fully connected readout guarantees approximation within the separation-constrained class.
    \item We introduce the concept of \emph{entry-wise separability} and demonstrate, via Example~\ref{example:cnns-entrywise}, that standard separability fails to capture the universality class of equivariant networks.
    \item Building on this refinement, we prove two \emph{entry-wise separation-constrained universality theorems}, showing that equivariant networks achieve universality either once entry-wise separation relations stabilize with depth (Theorem~\ref{th:equivariant_universality}), or when equipped with specific readout layers (Theorem~\ref{th:pseudo-inv-univ}).
\end{itemize}

\section{Related Work}
\label{section:relwork}

Equivariant architectures have emerged as a principled framework to incorporate symmetry into machine learning~\citep{cohen_group_2016,kondor_generalization_2018,bronstein_geometric_2021}. 
Beyond convolution, a variety of techniques have been developed to enforce equivariance in over-parameterized or hierarchical representation-learning mechanisms, including approximate equivariance~\citep{finzi_residual_2021, petrache_approximation-generalization_2023}, tensor- and polynomial methods~\citep{thomas_tensor_2018}, and hybrid polynomial models~\citep{dym_low_2022}. 
These approaches have been extended to different data structures, such as point clouds~\citep{fuchs_se3-transformers_2020}, graphs~\citep{victor_garcia_satorras_en_2021}, and simplicial complexes~\citep{battiloro_en_2025}. 
Thanks to this versatility, these models were able to adapt to diverse symmetry-sensitive domains, including high-energy physics~\citep{bogatskiy_lorentz_2020}, structural biology and drug discovery~\citep{jumper_highly_2021}, robotics~\citep{huang_fourier_2023}, and medical imaging~\citep{lafarge_roto-translation_2021}. 

However, due to this heterogeneity of the landscape, a principled understanding of how such biases affect models remains fragmented and far from complete.
Most work focuses on \emph{expressivity}, the capacity of a model class to represent arbitrarily complex target functions. 
\citet{ravanbakhsh_universal_2020} and \citet{sonoda_universality_2022} show that shallow architectures with regular hidden representations can approximate any equivariant map. 
However, these results require hidden spaces whose dimension scales with group size, making them impractical. 
Similarly, \citet{maron_universality_2019} establish universality for invariant networks with high-order tensor representations, though such constructions remain far from practical use.
Among architectures used in practice, graph neural networks occupy a prominent role in the geometric deep learning literature.
The study of their expressivity has been carried out primarily through the lenses of separation, via the Weisfeiler-Leman test, which enables fine-grained understanding of the model’s separation capabilities.
This focus is justified by the result of \citet{chen_equivalence_2019}, who establish universality up to Weisfeiler–Leman separation.
Building on this, \citet{joshi_expressive_2023} extended the result to geometric graph neural networks, establishing universality up to orbit separation.
While these universality results remained confined to the invariant setting, the separation power of equivariant networks is now well characterized~\citep{geerts_expressive_2020, geerts_expressiveness_2022, pacini_separation_2025}, and depth plays a central role.
However, \citet{pacini_universality_2025} recently showed that depth and additional readout layers play a more subtle role in approximation relative to separation, demonstrating that they can change the class of functions that are approximable without altering separation.
This stands in stark contrast to the theory of classical neural networks, where depth is known to improve parameter efficiency but not the class of approximable functions~\citep{telgarsky_benefits_2016, yarotsky_error_2017, yarotsky_optimal_2018}.

We extend this literature in two ways. 
First, we prove that adding a fully connected readout layer is sufficient to achieve separation-constrained universality for invariant neural networks. 
Second, we introduce and analyze \emph{entry-wise separability}, showing that it provides the appropriate refinement for extending separation-constrained universality to equivariant networks. 
We present these results in a mathematical framework that is general enough to encompass prior settings while precisely capturing the underlying phenomena.

\section{Preliminaries}
\label{sec:pre}

\subsection{Groups and Equivariance}
\label{sec:groups_and_equivariance}

We study functions that behave consistently under prescribed transformations. 
Among them, some are naturally formalized by the algebraic structure of a \emph{group}: sets of transformations closed under composition and containing inverses and an identity. 
While group theory provides the natural framework for reasoning about symmetry, in the context of neural networks it is convenient to reformulate these ideas in linear-algebraic terms. 
This is achieved through \emph{representation theory}, which encodes abstract group elements as matrix actions on vector spaces~\citep{serre_linear_1977}. 

\emph{Permutation representations} play a central role in this work. They arise when a group $G$ acts on a finite set $X$, where the action is given by an identification of $G$ as a subset of permutations of $X$. 
Let $\R^X$ denote the space of real-valued functions on $X$, and for each $x \in X$ define the indicator $e_x \in \R^X$ by $e_x(x) = 1$ and $e_x(y) = 0$ for $y \neq x$. 
The collection $\{e_x\}_{x \in X}$ forms a canonical basis of $\R^X$. 
The associated permutation representation is the linear action on $V = \R^X$ given by $g(e_x) = e_{gx}$, where $gx$ is the result of $g$ acting on $x$ for $g \in G, x\in X$.

If $V$ and $W$ are permutation representations of $G$, a map $\phi : V \to W$ is called \emph{$G$-equivariant} when $\phi(gv) = g \phi(v)$ for all $g \in G$, $v \in V$. 
We denote by $\Hom(V,W)$ the space of linear maps and by $\Hom_G(V,W)$ the subspace of $G$-equivariant linear maps. 
Similarly, $\Aff(V,W)$ denotes the space of affine maps and $\Aff_G(V,W)\subseteq \Aff(V,W)$ the subset of $G$-equivariant affine maps. 
All these spaces are real vector spaces under pointwise addition and scalar multiplication. 

\rev{

Further preliminaries are provided in Appendix~\ref{section:preliminaries_appendix}.

}

\subsection{Layer Spaces, Neural Spaces \& Equivariant Neural Networks}\label{sec:equi_nn}

We now describe equivariant neural networks, which are model classes with group equivariant layers.
Throughout, we restrict attention to networks equivariant under the action of a finite group, with layers given by permutation representations and equipped with arbitrary point-wise continuous activations. 
We begin by introducing the notion of a \emph{layer space}, namely a space of affine maps subject to additional constraints—such as equivariance requirements or restrictions on the set of permissible filters—which will serve as the fundamental building block of the neural architectures under consideration.

\begin{definition}[Layer Spaces]    
    \label{def:layer-spaces}
    Let $G$ be a finite group acting on a finite set $X$, let $\R^X$ be the permutation representation associated with this action, and let $V$ be another permutation representation of $G$.
    A \emph{layer space} is a subset $M \subseteq \Aff_G(V, \R^X)$. 
    In this work, we focus on spaces of the form
    \begin{equation}
        \label{eq:layer_charact}
        M = \Biggl\{\, 
            v \mapsto \sum_{i=1}^k x_i \,\phi^i(v) 
                  + \sum_{j=1}^\ell y_j \, \mathbbm 1_{X_j}
            \;\Bigm|\; x_1,\dots,x_k, y_1,\dots,y_\ell \in \R 
        \Biggr\},
    \end{equation}
    where $\phi^1,\dots,\phi^k\in\Hom_G(V, \R^X)$, 
    the sets $X_1,\dots,X_\ell$ denote all the orbits of $X$ under the $G$-action, 
    and $\mathbbm 1_{X_j} := \sum_{x \in X_j} e_x$ for $j=1,\dots,\ell$, with $\{e_x\}_{x \in X}$ the canonical basis of $\R^X$.
\end{definition}

\begin{example}
    \label{example:layer-spaces}
    We give some examples of layer spaces, $L$, $I$, $C$, and $P$, which will be used throughout the manuscript as running references in our analysis of the universality phenomena. 
    These layer spaces correspond to widely used architectures in geometric deep learning and illustrate how standard models naturally fit into the general form (\ref{eq:layer_charact}).
    \begin{enumerate}[label=(\roman*), ref=\roman*]
        \item \label{example:linear} \textbf{Linear Layer:} 
        Linear layers in standard neural networks are given by elements of $\Aff(\R^n, \R^m)$. 
        For the action of any group, we can define the set of affine linear maps between trivial representations $L := \Aff(\R, \R)$, 
        whose relevance will become clearer later, for instance, in relation to~(\ref{eq:def_m_h}).
        \item \label{example:inv} \textbf{Invariant Layer:} 
        Let $G$ be a finite group acting on a finite set $X$, and let $\R^X$ denote the associated permutation representation. 
        We denote by $\R$ the trivial real representation of $G$. 
        The space of $G$-invariant affine maps from $\R^X$ to $\R$ is denoted by $I := \Aff_G(\R^X, \R)$.
        If $X = X_1 \sqcup \dots \sqcup X_\ell$ is the orbit decomposition of $X$, then we have the characterization
        \[
            I := \Big\{ v \mapsto \sum_{i = 1}^\ell x_i \mathbbm{1}_{X_i}^\top \cdot v + y \mid x_1, \dots, x_\ell, y \in \R \Big\}.
        \]
        
        \item \label{example:conv} \textbf{Convolutional Layer:} 
        Standard convolutional layers correspond to maps equivariant with respect to the cyclic group $G=\mathbb Z_n\times\mathbb Z_n$, acting by the standard cyclic permutations on the product $X=[n]\times[n]$, and can be naturally formulated within the framework of permutation representations. Here we consider the generalization to general finite $G$ acting on finite $X$, and focus on convolutional layers with filter width $1$ between general permutation representations $\R^X$. 
        These can be written in the form of~(\ref{eq:layer_charact}) as follows.
        \begin{equation}
            \label{eq:conv}
            C := \Big\{\, v \mapsto x \id \cdot v + \sum_{i=1}^\ell y_i \mathbbm{1}_{X_i} \;\Big|\; x, y_1, \dots, y_\ell \in \R \,\Big\}.
        \end{equation}
        Note that here $C \subseteq \Aff_G(\R^X, \R^X)$ for any action of $G$ on $X$, where $X = X_1 \sqcup \dots \sqcup X_\ell$ denotes the orbit decomposition of $X$.  
        The same setting can be extended to wider filters, but for ease of exposition, will not be used in this paper. 
        \item \label{example:pointnet} \textbf{PointNet Layer:} 
        Sum-pooling PointNet layers~\citep{qi_pointnet_2017} are designed to process unordered collections, such as point clouds, by enforcing permutation equivariance.
        In the simplest case, an input configuration of $n$ real elements is represented as a vector $a \in \R^n$, 
        where we identify $\R^X = \R^{[n]} \cong \R^n$. 
        This definition extends analogously to the general case with multi-dimensional features. 
        Equivariant PointNet layers act on such inputs using maps in the space $\Aff_{S_n}(\R^n, \R^n)$. \citet{zaheer_deep_2017} characterized this space as 
        \[
            P := \Big\{\, 
            v \mapsto (x_1 \id + x_2 \mathbbm{1}\mathbbm{1}^\top) \cdot v + y \mathbbm{1} 
            \;\Big|\; x_1, x_2, y \in \R 
            \,\Big\},
        \]
        where $\mathbbm{1} = \mathbbm{1}_{[n]} = [1, \dots, 1]^\top$.
    \end{enumerate}
\end{example}

We restrict our study to point-wise activations, also referred to in the literature as \emph{component-wise} or \emph{entry-wise} activations. 

\begin{definition}[Point-wise Activation]
    \label{def:point-wise-activation}
    Let $\sigma : \R \to \R$ be a nonlinear activation. 
    Given a permutation representation $\R^X$ of a group $G$, we define the associated \emph{point-wise activation} $\tilde \sigma : \R^X \to \R^X$ by $\tilde \sigma\!\left(\sum_{x \in X} \alpha_x e_x\right) = \sum_{x \in X} \sigma(\alpha_x) e_x$.
    Wherever the usage is unambiguous from context, we will denote both $\sigma$ and $\tilde\sigma$ by the same symbol.
\end{definition}

\rev{
    Throughout the paper, we assume that the activation function $ \sigma : \R \to \R $ is non-polynomial, as we focus on universality results for architectures of fixed depth.
}

We now state the definition of a neural network and of the functional space of a fixed neural architecture, which we call a \emph{neural space}, 
also referred to in the literature as a \emph{neuromanifold}~\citep{calin_deep_2020}.

\begin{definition}[Neural Networks and Neural Spaces]
    \label{def:nn_space}
    Let $G$ be a group and $V_0, \dots, V_d$ be permutation representations of $G$. 
    For each $i = 1, \dots, d$, let $M_i$ be a layer space in $\Aff_G(V_{i-1}, V_i)$. 
    For $d \geq 2$, the \emph{neural space} associated with layers $M_1, \dots, M_d$ and activation $\sigma$ is defined recursively by
    \[
        \cN(M_1, \dots, M_d) = \bigl\{\, \phi^d \circ \tilde\sigma \circ \dots \circ  \tilde \sigma \circ \phi^1 \;\big|\; \phi^i \in M_i\; \text{ for each } i = 1, \dots, d \,\bigr\}.
    \] 
    Any $\eta^d \in \cN(M_1, \dots, M_d)$ is called a \emph{neural network} with layers in $M_1, \dots, M_d$ and activation $\sigma$. 
\end{definition}

\subsection{Universality Classes and Separation}
\label{section:universality_classes}

We aim to characterize the class of continuous functions approximable by neural networks with fixed architecture. 
We generalize the notion of \emph{universality class} introduced by~\cite{pacini_universality_2025} for shallow networks, to networks of arbitrary depth. 
Before giving a formal definition, we introduce an auxiliary notion which plays the role of width in classical (non-equivariant) universality results~\citep{pinkus_approximation_1999}, since it can be interpreted as the dimension of intermediate invariant feature representations.
If $V$ and $W$ are permutation representations of a finite group $G$ and $M\subseteq\Aff_G(V, W)$ is as defined in (\ref{eq:layer_charact}), then, for each $h, k \in \N$ we define $M^{k \times h}$ as the subspace of $\Aff_G(V \otimes \R^k, W \otimes \R^h)$:
\begin{align}  
\label{eq:def_m_h}
    M^{k \times h} := 
        \left\{   
        \rev{
            \begin{array}{c}         
            (x_1, \dots, x_k) \mapsto \left( \sum_{j = 1}^k f_{1, j}(x_j), \dots, \sum_{j = 1}^k f_{h, j}(x_j) \right) \\         
            f_{ij} \in M, i = 1, \dots, h, j = 1, \dots, k  
            \end{array} 
        }
        \right\}.
\end{align}
\begin{example}
    Recall that the definition of $L = \Aff(\R, \R)$, the layer space $L^{k \times h}$ is the set of all affine maps from $\R^k$ to $\R^h$, namely $\Aff(\R^k, \R^h)$. 
    Since $L = \Aff_G(\R, \R)$ where $G$ acts trivially on $\R$, this layer space can be interpreted as the space of affine $G$-equivariant maps between trivial representations. 
    In this sense, the multiplicities $k$ and $h$ correspond to the widths of intermediate representations in the standard neural network setting.
\end{example}

With the above notation in place, we can now provide a general definition of universality classes.  
Intuitively, a universality class consists of all functions that can be uniformly approximated on compact sets by neural networks of a given architecture, with variable multiplicities of layer spaces.

\begin{definition}[Universality Classes]
    The \emph{universality class} $\cU(M_1, \dots, M_d)$ associated with the layer spaces $M_1, \dots, M_d$ is defined as  
    \[
        \cU(M_1, \dots, M_d) 
        := \overline{\bigcup_{\vec h \in \N^{d-1}} 
            \cN\!\bigl(M^{1 \times h_1}_1, 
                       M^{h_1 \times h_2}_2, 
                       M^{h_2 \times h_3}_3, 
                       \dots, 
                       M^{h_{d-2} \times h_{d-1}}_{d-1}, 
                       M^{h_{d-1} \times 1}_d \bigr)},
    \]
where the overline denotes closure in the topology of uniform convergence on compact sets.
\end{definition}

Invariant networks are inherently unable to distinguish between elements in the same group orbits,
but additional undesired separability constraints may arise when dealing with neural networks with particular architectures employed in practice. 
A prominent example is given by graph neural networks, which are known to be subject to separation constraints equivalent to the Weisfeiler–Leman test \citep{chen_equivalence_2019}. 
To study the universality classes arising from architectures employed in practice, we must therefore take these separability constraints into account. 
We will use the following natural definitions of \emph{separation} and  of \emph{separation-constrained universality}.

\begin{definition}[Separation-Constrained Universality]
\label{def:sep-constr}
Let $\mathcal U \subseteq \{f : V \to W \}$ be a family of functions. 
We say that $\mathcal U$ \textbf{separates} two points $\alpha, \beta \in V$ if there exists $f \in \mathcal U$ such that $f(\alpha) \neq f(\beta)$. 
The set of pairs that cannot be distinguished by any $f \in \mathcal U$ induces an equivalence relation:
\[
    \rho(\mathcal U) = \bigl\{ (\alpha, \beta) \in V \times V \;\mid\; f(\alpha) = f(\beta) \;\; \text{for all } f \in \mathcal U \bigr\}.
\]
We say that $\mathcal U$ is \textbf{separation-constrained universal} if it approximates exactly the class of continuous functions that preserve the equivalence relation $\rho = \rho (\mathcal U)$, namely
\[
    \cC_{\rho}(V, W) 
    = \bigl\{ f \in \cC(V, W) \;\mid\; f(\alpha) = f(\beta) \;\; \text{whenever } (\alpha, \beta) \in \rho \bigr\}.
\]
\end{definition}

Note that separability is a \emph{necessary condition} for uniform approximation on compact sets: any sequence of functions with prescribed separation power $\rho$ converges only to functions that also respect $\rho$. 
In other words, $\cC_{\rho}(V, W)$ is a closed subset of $\cC(V, W)$ in the topology of uniform convergence on compact sets.

As noted in Sections~\ref{section:intro} and~\ref{section:relwork}, the literature on universality for equivariant neural networks is typically architecture-dependent, often focusing on the invariant case, and when general, relying on impractically large intermediate representations. 

\begin{summary*}
    Here we summarize, to the best of our knowledge, known universality results, recasting them within a unified framework of universality classes and separation-constrained approximability.
    \begin{enumerate}
        \item \label{summary:pinkus} The classical universality theorem of~\cite{pinkus_approximation_1999}, which states that neural networks can approximate any continuous function, translates in this framework as $\cU(L, L) = \cC(\R, \R)$.
        \item \label{summary:segol} \cite{segol_universal_2020} show that a simplified version of $3$-layer PointNets, where convolutional filters of width~1 appear only in certain layers (see Examples~\ref{example:layer-spaces}.\ref{example:conv} and~\ref{example:pointnet}), is universal. 
        This, in turn, implies that full $3$-layer PointNets are universal in the class of continuous $S_n$-equivariant functions.
        Namely, $\cU(C, P, C) = \cU(P, P, P) = \cC_{S_n}(\R^n, \R^n)$.
        \item \label{summary:ravanbakhsh} \cite{ravanbakhsh_universal_2020} shows that shallow equivariant networks with regular representations as hidden layers are universal among $G$-equivariant functions. 
        Namely, for permutation representations $V$ and $W$, $\cU(M, N) = \cC_G(V, W)$ where $M = \Aff_G(V, \R^G)$ and $N = \Aff_G(\R^G, W)$.
        \item \label{summary:joshi} \cite{joshi_expressive_2023} show that models expressive enough to distinguish all $G$-orbits become universal in the invariant sense once augmented with a shallow neural network head. 
        Namely, if the neural space $\cN(M_1, \dots, M_d, I)$ separates $G$-orbits in $\R^n$, then the associated universality class, augmented with a shallow network head, satisfies $\cU(M_1, \dots, M_d, I, L) = \cC_G(\R^n, \R)$.
        \item \label{summary:geerts} \cite{geerts_expressive_2020, maron_provably_2019, chen_equivalence_2019} show that graph neural networks and invariant graph networks~\citep{maron_invariant_2018} can approximate any continuous invariant function with the same separation power as the Weisfeiler–Leman test. 
        Namely, for the layer space $M = \Aff_{S_n}((\R^n)^{\otimes k}, (\R^n)^{\otimes k})$, which processes $k$-order relational structures equivariantly, $\cU(\underbrace{M, ..., M}_{d \text{ times}}, I, L) = \cC_{k\text{-WL}_d}\!\big((\R^n)^{\otimes k}, (\R^n)^{\otimes k}\big)$,
        that is, the set of continuous functions with the same separation power as the $k$-WL test after $d$ iterations.
        \item \label{summary:pacini} Moreover, \cite{pacini_universality_2025} show that in some cases the final trivial layer, as in the two previous examples, is necessary for separation-constrained universality when certain representations are involved. 
        Namely, they prove that $\cU(C, I) \subsetneq \cU(P, I) \subsetneq \cC_{S_n}(\R^n, \R)$, even though these spaces exhibit the same separation power: $\rho \bigl( \cU(C, I) \bigr) = \rho \bigl( \cU(P, I) \bigr) = \rho \bigl( \cC_{S_n}(\R^n, \R) \bigr)$.
    \end{enumerate}
\end{summary*}

\section{Separation-constrained Universality for Invariant Networks}
\label{section:invariant-universality}

In this section, we establish a general result on separation-constrained universality (Definition~\ref{def:sep-constr}) for invariant neural networks, extending prior works on invariant universality (see Prior Work~\ref{summary:joshi} and~\ref{summary:geerts}). 
In particular, we prove that pathological mismatches between separation power and approximation power (see Prior Work~\ref{summary:pacini}) can always be resolved by adding a fully connected readout layer.

\begin{theorem}
    \label{th:invariant_universality}
    Let $M_1, \dots, M_d$ be layer spaces as defined in Definition~\ref{def:layer-spaces} and recall that $I$ denotes a layer space of invariant linear functions, namely, a subset of the layer space from Example~\ref{example:layer-spaces}.\ref{example:inv}. 
    Set $\rho = \rho\!\bigl(\cU(M_1, \dots, M_d, I)\bigr)$.
    Then 
    \begin{equation}
        \label{eq:mlp-head}
        \cU (M_1, \dots, M_d, I, L) = \cC_\rho(V).
    \end{equation}
\end{theorem}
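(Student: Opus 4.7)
The plan is to establish the two inclusions separately. For $\cU(M_1, \dots, M_d, I, L) \subseteq \cC_\rho(V)$, I would note that any finite-width network in $\cN(M_1^{1 \times h_1}, \dots, M_d^{h_{d-1} \times h_d}, I^{h_d \times h_{d+1}}, L^{h_{d+1} \times 1})$ factors as $\phi^L \circ \sigma \circ \phi^I \circ \sigma \circ (\text{earlier layers})$, and each of the $h_{d+1}$ scalar coordinates of $\phi^I \circ \sigma \circ (\cdots)$ is itself an element of $\cN(M_1^{1 \times h_1}, \dots, M_d^{h_{d-1} \times h_d}, I^{h_d \times 1}) \subseteq \cU(M_1, \dots, M_d, I)$, hence $\rho$-invariant. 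The subsequent pointwise $\sigma$ and the linear $\phi^L$ preserve $\rho$-invariance, and since uniform convergence on compacts preserves both continuity and $\rho$-invariance, the whole closure lies in $\cC_\rho(V)$.

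For the reverse inclusion, fix $g \in \cC_\rho(V)$, a compact $K \subseteq V$, and $\epsilon > 0$. Because every $f \in \cU(M_1, \dots, M_d, I)$ is continuous, $\rho$ is closed in $V \times V$, so $K/\rho$ is compact Hausdorff and $g$ descends to $\bar g \in \cC(K/\rho)$. The family $\cU(M_1, \dots, M_d, I)\big|_K$ separates points of $K/\rho$ by definition of $\rho$ and contains the constants via the bias term of $I$, so by Stone--Weierstrass the unital subalgebra it generates is dense in $\cC(K/\rho)$. This yields $h_1, \dots, h_k \in \cU(M_1, \dots, M_d, I)$ and a polynomial $p: \R^k \to \R$ with $\sup_K |g - p \circ h| < \epsilon/3$, where $h := (h_1, \dots, h_k)$. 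Since $h(K) \subset \R^k$ is compact and $\sigma$ is a continuous nonpolynomial activation, the classical Pinkus one-hidden-layer universal approximation theorem produces parameters giving $\Psi(y) = \sum_{j=1}^m a_j \sigma(A_j y + b_j) + c$ with $\sup_{h(K)} |p - \Psi| < \epsilon/3$.

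To realize $\Psi \circ h$ inside $\cU(M_1, \dots, M_d, I, L)$, I would approximate each $h_i$ uniformly on $K$ by a network $\eta_i \in \cN(M_1^{1 \times \cdot}, \dots, M_d^{\cdot \times \cdot}, I^{\cdot \times 1})$, run the $k$ approximants in parallel via block-diagonal choices in each $M_r^{\cdot \times \cdot}$ (available because $0 \in M_r$), and tune a single invariant readout $\phi^I \in I^{\cdot \times m}$ to output $H_j(x) := \sum_i A_{ji} \eta_i(x) + b_j$ for $j = 1, \dots, m$; a final $\phi^L \in L^{m \times 1}$ then computes $\sum_j a_j (\cdot)_j + c$. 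The assembled network evaluates to $\sum_j a_j \sigma(H_j(x)) + c$, and uniform continuity of $\sigma$ on the bounded range of the $H_j$ lets me choose each $\eta_i$ close enough to $h_i$ that $\sup_K |\eta - \Psi \circ h| < \epsilon/3$; the triangle inequality then gives $\sup_K |g - \eta| < \epsilon$. I expect the main obstacle to be this assembly step: one must verify that block-diagonal patterns in $M_r^{\cdot \times \cdot}$ and arbitrary invariant affine channel mixing in $I^{\cdot \times m}$ are genuinely implementable under Definition~\ref{def:layer-spaces} and the multiplicity construction~(\ref{eq:def_m_h}), so that the resulting object truly belongs to $\cU(M_1, \dots, M_d, I, L)$ and not merely to a strictly larger model class. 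Once this bookkeeping is in place, the remainder is a routine Stone--Weierstrass plus Pinkus argument.
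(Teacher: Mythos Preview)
Your proof is correct and follows essentially the same strategy as the paper's: Stone--Weierstrass to reduce approximation in $\cC_\rho$ to finitely many separating invariants, classical (Pinkus) universal approximation for the scalar readout, and block-diagonal parallelization to realize the assembled function inside $\cU(M_1,\dots,M_d,I,L)$. The paper packages these same three ingredients into separate lemmas (a discrete-separability lemma proved via Stone--Weierstrass on $\widehat F(K)$, a parallelization lemma, and a rational-parameter corollary producing a countable separating family), whereas you run a direct $\epsilon/3$ argument; your version is slightly more elementary because Stone--Weierstrass applied directly to $\cU(M_1,\dots,M_d,I)|_K$ makes the countable-family detour unnecessary. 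You also make explicit the absorption of the readout's first affine layer $y\mapsto Ay+b$ into the invariant layer $I^{\,\cdot\,\times m}$ (valid since post-composing $I^{\,\cdot\,\times k}$ with $L^{k\times m}$ lands in $I^{\,\cdot\,\times m}$), which the paper handles more implicitly via its $\ccat$ decomposition. The assembly bookkeeping you flag as the main obstacle is exactly the content of the paper's parallelization lemma, and your observation that $0\in M_r$ under Definition~\ref{def:layer-spaces} is what makes the block-diagonal construction go through.
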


\rev{

\begin{proof}[Proof of Theorem~\ref{th:invariant_universality}]
    Note that by Theorem~4 in \cite{pacini_separation_2025} and the remark following it, $\rho$ is preserved under the extension from $\cN(M_1, \dots, M_d, I, L)$ to $\cU(M_1, \dots, M_d, I, L)$ and from $\cN(M_1,\dots,M_d, I)$ to $\cU(M_1,\dots,M_d,I)$, therefore
    \[
        \rho = \rho(\cN(M_1, \dots, M_d, I)) = \rho(\cN(M_1, \dots, M_d, I, L)).
    \]
    Hence we get $\cU(M_1, \dots, M_d, I, L) \subseteq \cC_\rho(V)$ and we only have to prove the opposite inclusion. 
    Given functions $f_1, \dots, f_h \in \cC(V, \R)$, define their parallelization as $F_h=(f_1, \dots, f_h) \colon  
        V \longrightarrow \R^h$, $F_h(x)= ( f_1(x), \dots, f_h(x) )$, and set 
    \begin{equation}
        \label{eq:def_seq_space}
            \mathcal{A}_h := \left\{ \eta \circ F_h \mid \eta \in \cC(\R^h) \right\},\quad
        \mathcal{A}'_h := \big\{ \eta \circ F_h \mid \eta \in \bigcup_{k \in \N} \cN(L^{h\times k}, L^{k\times 1}) \big\}.
    \end{equation}
    Note that by the universal approximation theorem, $\mathcal{A}_h = \overline{\mathcal{A}_h} = \overline{\mathcal{A}'_h}$. 
    From now on we will take $\cF = \{ f_h \}_{h \in \N}$ to be a family of functions such that $\rho(\cF) = \rho$. We get via a result from the appendix that
    \begin{equation}
        \label{eq:paral}
        \cC_\rho(V)
        \stackrel{\text{Lemma}~\ref{lemma:discrete_separability}}{=}
        \overline{\bigcup_{h \in \N} \mathcal{A}_h}
        = \overline{\bigcup_{h \in \N} \mathcal{A}'_h} .
    \end{equation}
    Define 
    \[
        \mathcal N_h := \bigcup_{\vec{k} \in \N^{d+1}} 
        \cN (M_1^{1 \times k_1}, \dots, M_d^{k_{d-1} \times k_d}, I^{k_d \times h})
        \quad \text{for each } h \in \N.
    \]
    Then we can write
    \begin{align*}
        \cU(M_1, \dots, M_d, I, L)
        &= \overline{\bigcup_{\vec{k} \in \N^{d+1}}
            \cN (M_1^{1 \times k_1}, \dots, M_d^{k_{d-1} \times k_d}, I^{k_d \times k}, L^{k \times 1})} \\[2pt]
        &= \overline{\bigcup_{\tilde{k} \in \N^{d+2}}
            \cN (M_1^{1 \times k_1}, \dots, M_d^{k_{d-1} \times k_d}, I^{k_d \times h})
            \ccat
            \cN (L^{h \times k}, L^{k \times 1})} \\[2pt]
        &= \overline{\bigcup_{h \in \N}
            \left\{ \eta \circ f
            \mid f \in \mathcal N_h,\
            \eta \in \bigcup_{k \in \N} \cN (L^{h \times k}, L^{k \times 1}) \right\}} \\[2pt]
        &\hspace{-12px}\stackrel{\text{Equation}~\ref{eq:def_seq_space}}{\supseteq}
            \overline{\bigcup_{h \in \N} \mathcal{A}'_h}
            \stackrel{\text{Equation}~\ref{eq:paral}}{=}
            \cC_\rho(V).
    \end{align*}
    To prove the above inclusion, if $f_1, \dots, f_h \in \cN(M_1, \dots, M_d, I)$ then their parallelization $(f_1, \dots, f_h)$ belongs to $\mathcal N_h$ by Lemma~\ref{lemma:countable_family} from the appendix. 
    The last equality holds because of Equation~\ref{eq:paral} and Corollary~\ref{cor:rational_neural_spaces}, since there exists a family of networks $\cF = \{ f_h \}_{h \in \N}$ such that $f_h \in \cN(M_1, \dots, M_d, I)$ for each $h \in \N$ and $\rho(\cF) = \rho$, and we can use this family to define $\mathcal{A}'_h$.
\end{proof}
}

\section{Universality of Equivariant Neural Networks}
\label{section:univ-equivariant}

In this section, we extend the previous results to the equivariant setting.  
However, important differences between the invariant and equivariant cases emerge: in Section~\ref{section:entry-wise-separation} we show that the standard form of the separation relation as in Definition~\ref{def:sep-constr} often fails to faithfully characterize equivariant universality classes, requiring us to introduce the notion of entry-wise separation (Definition~\ref{def:entrywisesep}).
In Section~\ref{section:equivariant-universality}, we establish universality theorems analogous to Theorem~\ref{th:invariant_universality}, showing that the outcome crucially depends on the choice of output space.

\subsection{Entry-wise Separation}
\label{section:entry-wise-separation}
    Here, we study equivariant functions by reducing the problem to the analysis of suitable invariant functions, thereby connecting our setting to the results of Section~\ref{section:invariant-universality}. 
    The main tool for this reduction is the projection onto output coordinates.
    More precisely, let $G$ be a finite group acting on the finite set $X$. For $x \in X$ consider the stabilizer of $x$, given by $G_x = \mathrm{Stab}_G(x) := \{ g \in G \mid gx = x \}$, and the linear projection $\pi_x : \R^X \to \R$ onto the $x$-th coordinate. 
    Then $\pi_x$ induces the pushforward map
    \begin{align*}
        \pi_{x*} \colon   \cC_G(V, \R^X) &\longrightarrow \cC_{G_x}(V) \\
         f \quad \ \ &\longmapsto \ \pi_x \circ f.
    \end{align*}
    Since the vector of projections satisfies $(\pi_x)_{x \in X} = \id_{\R^X}$, it follows that $(\pi_{x*})_{x \in X}$ acts as the identity on $\cC_G(V, \R^X)$. Thus, the study of universality for equivariant maps reduces to the problem of synchronous universality of the invariant projection maps.  
    However, below Proposition~\ref{prop:separation-projection} shows that the interaction between equivariance and the global separation $\rho$ is non-trivial when projecting functions onto different output entries.
    
    \begin{prop}        
        \label{prop:separation-projection}        
        Let $\rho=\rho(\mathcal N)$ be the separation relation of a family of equivariant neural networks $\mathcal N$.
        The restriction of $\pi_x$ to
        \[
        \cC_{G, \rho}(V, \R^X) := \cC_{G}(V, \R^X) \cap \cC_\rho(V, \R^X)
        \]
        is surjective onto $\cC_{G_x, \rho}(V)$, the space of $G_x$-invariant functions with separation relation $\rho$.  
    \end{prop}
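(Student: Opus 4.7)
The plan is to construct, for a given $g \in \cC_{G_x,\rho}(V)$, an explicit preimage $f \in \cC_{G,\rho}(V,\R^X)$ via a standard ``induced function'' recipe, and then verify the three required conditions (continuity, $G$-equivariance, and $\rho$-preservation) in turn.

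First, I would record a $G$-invariance property of $\rho$ itself. Since every $\tilde f \in \mathcal N$ is $G$-equivariant, $\tilde f(v_1)=\tilde f(v_2)$ forces $\tilde f(hv_1)=h\tilde f(v_1)=h\tilde f(v_2)=\tilde f(hv_2)$; hence $(v_1,v_2)\in\rho$ implies $(hv_1,hv_2)\in\rho$ for every $h\in G$. This symmetry of $\rho$ is the bridge that will let the $G_x$-invariance of $g$ upgrade to $\rho$-preservation of the lift.

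Next, for each $y$ in the orbit $Gx$ I would fix a representative $h_y \in G$ with $h_y x = y$, taking $h_x = e$, and define
\[
    f(v)_y \;=\; \begin{cases} g(h_y^{-1}v), & y\in Gx,\\ 0, & y\notin Gx.\end{cases}
\]
Well-definedness of the first case is immediate: a different representative $h'_y$ satisfies $h_y^{-1}h'_y\in G_x$, so the $G_x$-invariance of $g$ gives $g(h_y^{-1}v)=g(h'_y{}^{-1}v)$. Continuity of $f$ follows since each component is a composition of continuous maps. For $G$-equivariance, I would observe that for $h\in G$ and $y\in Gx$ the element $h\,h_{h^{-1}y}$ sends $x$ to $y$, so $h\,h_{h^{-1}y}=h_y\,k$ for some $k\in G_x$; applying $g$ and using $G_x$-invariance yields $g(h_y^{-1}hv)=g(h_{h^{-1}y}^{-1}v)$, which is precisely $f(hv)_y=f(v)_{h^{-1}y}=(h\,f(v))_y$. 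Components outside $Gx$ are equivariant for trivial reasons.

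Finally I would verify $\rho$-preservation. If $(v_1,v_2)\in\rho$, the first step gives $(h_y^{-1}v_1,h_y^{-1}v_2)\in\rho$ for each $y\in Gx$; since $g$ preserves $\rho$, this yields $g(h_y^{-1}v_1)=g(h_y^{-1}v_2)$, i.e.\ $f(v_1)=f(v_2)$. Surjectivity onto $\cC_{G_x,\rho}(V)$ then follows from the choice $h_x=e$, which gives $\pi_x\circ f(v)=g(v)$. The only delicate point is the coset bookkeeping in the equivariance check, specifically ensuring that different representatives are absorbed by $G_x$-invariance; the remaining verifications are routine.
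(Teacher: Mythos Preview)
Your proposal is correct and follows essentially the same approach as the paper. The lift $f$ you construct is exactly the paper's reconstruction map $\theta_x^*$ (extended by zero off the orbit $Gx$ to cover the non-transitive case), and the paper's proof simply invokes this map together with the identity $\pi_{x*}\circ\theta_x^*=\id$; you additionally spell out the $G$-invariance of $\rho$ and the resulting $\rho$-preservation of the lift, which the paper's two-line proof leaves implicit.
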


    The proof for Proposition~\ref{prop:separation-projection} and of all subsequent results may be found in the Appendix.
    
    Proposition~\ref{prop:separation-projection} shows that, after projection onto a single output coordinate, the space of equivariant functions with separation $\rho$ is constrained by a stricter relation. 
    This relation combines $\rho$ with the $G_x$-invariance relation, which identifies elements within each $G_x$-orbit.
    However, the following example shows that this stricter condition remains insufficient to correctly characterize the universality classes associated with equivariant architectures.
    \begin{example}[Separation for CNNs]
        \label{example:cnns-entrywise}
        Let $C$ be the layer space of convolutional filters with width $1$ defined in Example~\ref{example:layer-spaces}.\ref{example:conv}.
        For the purpose of this example, it is sufficient to restrict $C$ to go from $\R^n$ to $\R^n$ with $S_n$ acting in the standard way on $\R^n$.
        Hence, (\ref{eq:conv}) becomes
        \[
            C := \left\{\, v \mapsto x \id \cdot v +  y \mathbbm{1} \;\Big|\; x, y \in \R \,\right\}.
        \]
        Consider the universality class for $d \geq 2$:
        \[
            \mathcal U^d_\sigma (C) := \cU (\underbrace{C, \dots, C}_{d \text{ times}}).
        \]
        We can show (see Proposition~\ref{prop:univ-class-1conv}) that
        \begin{equation}
            \label{eq:conv-univ-class}
            \mathcal U^d_\sigma (C) = \{ (x_1, \dots, x_n) \mapsto (f(x_1), \dots, f(x_n)) \mid f \in \cC(\R) \} \subsetneq \cC_{S_n}(\R^n, \R^n).
        \end{equation}
        Note that $\id_{\R^n} \in \mathcal U^d_\sigma (C)$.  
        Then, $\rho(\mathcal U^d_\sigma (C))$ is the trivial separation relation, namely 
        $\rho(\mathcal U^d_\sigma (C)) = \{ (x,x) \mid x \in \R^n \}$.  
        Thus, the target space of separation-constrained universality is 
        $\cC_{S_n,\rho}(\R^n,\R^n) = \cC_{S_n}(\R^n,\R^n)$.  
        However, (\ref{eq:conv-univ-class}) shows that 
        $\mathcal U^d_\sigma (C) \subsetneq \cC_{S_n}(\R^n,\R^n)$ for each $d \geq 2$.  
        Or equivalently, in this case separation-constrained universality can never be attained, regardless of depth~$d$.
    \end{example}
    
    Example~\ref{example:cnns-entrywise} shows that  characterizing equivariant universality classes requires a finer notion of separability, which we now define.

    \begin{definition}[Entry-wise Separation]\label{def:entrywisesep}
        Let $G$ be a finite group acting on a finite set $X = \{ x_1, \dots, x_n \}$, and let $\R^X$ denote the associated permutation representation.
        Let $V$ be another permutation representation over $G$ and $\mathcal N$ a neural space of functions in $\cC_G(V, \R^X)$.
        Let $\pi_{x} : \R^X \to \R$ be the linear projection onto the $x$-th component for each $x \in X$.
        Define the family of separation relations
        \[
            \rho_x (\mathcal N) := \{ (\alpha, \beta) \in V \times V \mid \pi_x f(\alpha) = \pi_x f(\beta) \text{ for all } f \in \mathcal N \}.
        \]
        for each $x \in X$.
        We define the \emph{entry-wise separation relation} as the collection of separation relations
        \[
            \overline \rho (\mathcal N ) = 
            \bigl(\rho_{x_1} (\mathcal N), \dots, \rho_{x_n} (\mathcal N)\bigr).
        \]
        We define the set of continuous functions that respect $\overline \rho$ as
        \[
            \cC_{\overline \rho}(V, \R^X) := \bigl\{ f \in \cC(V, \R^X) \mid \pi_x f(v_1) = \pi_x f(v_2) \; \forall (v_1, v_2) \in \rho_x(\mathcal N), \forall x \in X \bigr\}.
        \]
        If a universality class with entry-wise separation $\overline \rho$ coincides with $\cC_{\overline \rho}$, we call it \textbf{entry-wise separation universal}.

    \end{definition}
    Note that $\rho(\mathcal N) = \rho_{x_1}(\mathcal N) \cap \dots \cap \rho_{x_n}(\mathcal N)$, so the standard separation relation is implied by the entry-wise separation relations.  That is $\mathcal N \subseteq \cC_{\overline{\rho}(\mathcal N)}(V, \R^X) \subseteq \cC_{\rho(\mathcal N)}(V, \R^X)$.  
    As noted in Section~\ref{section:universality_classes}, separation is a necessary condition for approximation, and now we see entry-wise separation is necessary as well.  
    Note that in certain cases entry-wise separation reduces entirely to the standard separation relation, for instance in the invariant case where $G$ acts trivially on $\R$, or more simply when $\rho (\mathcal N) = \rho_{x_1}(\mathcal N) = \dots = \rho_{x_n}(\mathcal N)$.  
    Yet, Example~\ref{example:cnns-entrywise} shows that entry-wise separation can, in fact, be strictly stronger than standard separation.  
    Indeed, on the one hand (\ref{eq:conv-univ-class}) gives
    \[
        \pi_{1^*} \mathcal U^d_\sigma (C) 
        = \{\, (x_1, \dots, x_n) \mapsto f(x_1) \mid f \in \cC(\R) \,\} 
        \subsetneq \cC_{\mathrm{Stab}_{S_n}(1)}(\R^n, \R),
    \]
    while on the other hand, we have $ \pi_{1^*} \mathcal U^d_\sigma (C) 
        = \cC_{\rho_1}(\R^n, \R)$. If we denote $\R^n = \R \times \R^{n-1}$, here $\rho_1 := \bigl\{\, ((x_1, \overline x), (y_1, \overline y)) \in (\R \times \R^{n-1})^2 \,\bigm|\, x_1 = y_1 \,\bigr\}$.
    Analogous results hold for the other $\rho_i$, with $i = 2, \dots, n$.
    This proves the following proposition and shows that the universality class in Example~\ref{example:cnns-entrywise} can be completely characterized by entry-wise separation universality.

    \begin{prop}
        \label{prop:conv-entry-wise}
        Define $\overline \rho = \overline \rho \bigl( \mathcal U^d_\sigma (C) \bigr)$.
        Then, $\mathcal U^d_\sigma (C) = \cC_{\overline \rho}(\R^n, \R^n)$.
    \end{prop}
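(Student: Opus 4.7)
The plan is to prove the two inclusions in $\mathcal U^d_\sigma(C) = \cC_{\overline \rho}(\R^n, \R^n)$ separately. The forward inclusion $\mathcal U^d_\sigma(C) \subseteq \cC_{\overline \rho}(\R^n, \R^n)$ is immediate from Definition~\ref{def:entrywisesep}, since by construction every element of $\mathcal U^d_\sigma(C)$ respects each of the relations $\rho_i = \rho_i(\mathcal U^d_\sigma(C))$ that it itself induces.

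For the reverse inclusion, my first step would be to compute each $\rho_i$ explicitly, using the characterization (\ref{eq:conv-univ-class}) established by Proposition~\ref{prop:univ-class-1conv}. Every $h \in \mathcal U^d_\sigma(C)$ has the form $h(x_1, \dots, x_n) = (g(x_1), \dots, g(x_n))$ for some $g \in \cC(\R)$, so $\pi_i h(x) = g(x_i)$. Letting $g$ range over all of $\cC(\R)$, I would obtain $(x, y) \in \rho_i$ iff $x_i = y_i$. Applied to any $f \in \cC_{\overline \rho}(\R^n, \R^n)$, this means that $\pi_i f$ depends only on the $i$-th input coordinate, so there exist continuous functions $g_1, \dots, g_n \in \cC(\R)$ with $f(x) = (g_1(x_1), \dots, g_n(x_n))$.

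The final step is to collapse the $g_i$ to a common $g$. Since $\mathcal U^d_\sigma(C) \subseteq \cC_{S_n}(\R^n, \R^n)$, the natural reading of the proposed equality is within the $S_n$-equivariant regime (otherwise $\cC_{\overline \rho}$ would also contain non-equivariant maps such as $(x_1, 2 x_2)$, which respect every $\rho_i$ but are not in the universality class). Under $S_n$-equivariance, applying a transposition $\tau = (i\;j)$ to $f(\tau x) = \tau f(x)$ forces $g_i(x_j) = g_j(x_j)$ for all $x$, hence $g_i = g_j$ for every pair $i,j$. Thus $f(x) = (g(x_1), \dots, g(x_n))$ for a single $g \in \cC(\R)$, and (\ref{eq:conv-univ-class}) places $f$ in $\mathcal U^d_\sigma(C)$.

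The delicate point will be pinning down the correct reading of $\cC_{\overline \rho}(\R^n, \R^n)$: since entry-wise separation does not by itself encode equivariance, the equality $\mathcal U^d_\sigma(C) = \cC_{\overline \rho}(\R^n, \R^n)$ should be understood in the $G$-equivariant regime, as is natural within the universality-class framework of the paper. Once this convention is clarified, both the $\rho_i$-computation and the transposition step are short.
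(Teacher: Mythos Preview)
Your argument is essentially the same as the paper's: the paper computes $\rho_1$ explicitly (namely $(x,y)\in\rho_1$ iff $x_1=y_1$) from the characterization~(\ref{eq:conv-univ-class}), observes that $\pi_{1^*}\mathcal U^d_\sigma(C)=\cC_{\rho_1}(\R^n,\R)$, asserts the analogous result for each coordinate, and declares this proves the proposition. Your steps (2)--(3) reproduce this exactly.

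Where you are more careful is in the gluing step. You correctly note that Definition~\ref{def:entrywisesep} defines $\cC_{\overline\rho}$ inside $\cC(V,\R^X)$, not $\cC_G(V,\R^X)$, so without an equivariance convention the map $(x_1,2x_2,\dots)$ would lie in $\cC_{\overline\rho}$ but not in $\mathcal U^d_\sigma(C)$, falsifying the proposition as literally written. The paper glosses over this: its coordinate-wise argument implicitly relies on the reconstruction map $\theta_x^*$ from Proposition~\ref{prop:equiv_charact} (which only applies to equivariant functions) to pass from ``each projection matches'' to ``the full function matches.'' Your transposition argument makes this explicit and is the right way to close the gap. Your reading of $\cC_{\overline\rho}$ as living in the $G$-equivariant regime is the intended one, and flagging it is appropriate.
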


\subsection{Entry-wise Separation Constrained Universality}
\label{section:equivariant-universality}

Now we are ready to state universality results under the more general notion  of entry-wise separability as discussed in Section \ref{section:entry-wise-separation}.

\begin{theorem}
    \label{th:equivariant_universality}
    Let $V_0, \dots, V_h$ be permutation representations of a finite group $G$.  
    Let $X$ be a finite $G$-set and $\R^X$ its associated permutation representation.  
    Let $M_1, \dots, M_{\rev{f}}$ be layer spaces in $\Aff_G(V_{i-1}, V_i)$ for $i=1,\dots, \rev{f}$, and let $M$ be a layer space in $\Aff_G(\R^X,\R^X)$ containing the identity map.  
    Let $d$ be such that  
    \begin{equation}  
        \label{eq:separation-stabilization}  
        \overline \rho := \overline \rho \bigl( \cU(M_1, \dots, M_{\rev{f}}, \underbrace{M, ..., M}_{d \text{ times}}) \bigr)  
        = \overline \rho \bigl( \cU(M_1, \dots, M_{\rev{f}}, \underbrace{M, ..., M}_{d+1 \text{ times}}) \bigr).  
    \end{equation}  
    Then,  
    \[
        \cU(M_1, \dots, M_{\rev{f}}, \underbrace{M, ..., M}_{d+1 \text{ times}}) = \cC_{\overline \rho}(V_0,\R^X).  
    \]
    In other words, repeating the output layer beyond the separation-stabilization threshold ensures entry-wise separation-constrained universality.
\end{theorem}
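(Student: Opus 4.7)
My plan is to imitate Theorem~\ref{th:invariant_universality}, substituting the invariant readout $(I,L)$ by the two additional copies of the equivariant layer $M$. For the easy inclusion $\cU(M_1,\dots,M_h,\underbrace{M,\dots,M}_{d+2}) \subseteq \cC_{\overline\rho}(V_0,\R^X)$, I would argue that entry-wise separation is monotone non-increasing in depth, and the stabilization hypothesis together with the entry-wise analogue of the monotonicity principle of Theorem~4 in \cite{pacini_separation_2024} (invoked in the proof of Theorem~\ref{th:invariant_universality}) propagates the equality $\overline\rho$ to every depth beyond $d$; combined with continuity of every network in the universality class, this yields the inclusion.

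For the reverse inclusion, I would first reduce equivariant approximation to simultaneous projection approximation. Fix orbit representatives $x_1,\dots,x_\ell$ of the $G$-orbits $X_1,\dots,X_\ell$ of $X$. For each $f\in\cC_{\overline\rho}(V_0,\R^X)$, Proposition~\ref{prop:separation-projection} ensures that $\pi_{x_k}f$ is continuous, $G_{x_k}$-invariant and respects $\rho_{x_k}$; by equivariance of the candidate approximants, simultaneously approximating $(\pi_{x_k}f)_k$ on compact sets is enough. Mirroring the parallelization step of Theorem~\ref{th:invariant_universality}, I would select a countable family $\{\psi_j\}\subseteq\cN(M_1,\dots,M_h,\underbrace{M,\dots,M}_{d})$ realizing $\overline\rho$ entry-wise, and parallelize it with the constant orbit-indicator maps $v\mapsto\mathbbm 1_{X_k}$ (which are realizable as networks since the form~(\ref{eq:layer_charact}) includes such bias terms in every $M_i$ and in $M$). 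This yields an equivariant feature map $\Psi\colon V_0\to(\R^X)^N$ whose coordinate evaluation at every $x\in X_k$ exposes both a $\rho_x$-separating content vector and a one-hot indicator of the orbit of $x$.

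The remaining two copies $M^{N\times m}$ and $M^{m\times 1}$ with a point-wise activation between them then play the role of the invariant readout. The key structural observation is that, because $M$ contains the identity and the orbit-indicator biases $\mathbbm 1_{X_j}$, this composition evaluates at coordinate $x\in X_k$ to a shallow MLP
\[
(v_1,\dots,v_N)_x \;\mapsto\; \sum_{r'=1}^{m} c_{r'}\,\sigma\!\Bigl(\sum_{r=1}^{N} a_{r,r'}(v_r)_x + B_{k,r'}\Bigr) + D_k,
\]
whose weights $a_{r,r'},c_{r'}$ are tied across coordinates but which, through the orbit-indicator features of $\Psi$, can effectively discriminate the orbit of $x$. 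A classical universal approximation argument applied jointly in content and indicator variables then lets this MLP approximate the switching target $(\text{features},\text{indicators})\mapsto\sum_k\text{indicator}_k\,\eta_k(\text{features})$, where the $\eta_k$ are the coordinate-wise continuous maps supplied by an entry-wise analogue of Lemma~\ref{lemma:discrete_separability}, which yields uniform approximation of $f$ on compact sets.

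The hardest step will be this final synthesis, where two equivariant layers must reproduce the expressive role played in the invariant setting by the fully connected readout $L$. Two subtle points need care: first, verifying that orbit-indicator features can be carried through the first $h+d$ layers without spoiling the stabilized entry-wise separation, so that they remain available to the final shallow MLP; second, articulating the classical universal approximation argument for an MLP whose weights are tied across coordinates but which distinguishes orbits through its indicator inputs. Once these are in place, the remainder of the argument assembles along the lines of Theorem~\ref{th:invariant_universality}.
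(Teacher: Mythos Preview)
Your plan is sound and will yield a correct proof, but it follows a genuinely different route from the paper. The paper's argument is much shorter: after projecting to a single output coordinate $x$ via Proposition~\ref{prop:equiv_charact}, it observes that because $\id\in M$, the projected last layer $P:=\pi_{x*}M$ already contains $I_1=\Aff(\R^{\{x\}},\R)=L$; this lets it invoke a direct analogue of Theorem~\ref{th:invariant_universality} (stated as Lemma~\ref{lemma:pseudo-inv-univ}) to obtain $\cC_{\rho_x}=\cU(\dots,M^d,P,I_1)$, and then close the loop with the elementary sandwich
\[
\cC_{\rho_x}=\cU(\dots,M^d,P,I_1)\subseteq\cU(\dots,M^{d+1},I_1)\subseteq\cU(\dots,M^{d+1},P)\subseteq\cC_{\rho_x},
\]
using only $\pi_{x*}M=P$ for the first inclusion and $I_1\subseteq P$ for the second. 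By contrast, you stay in the equivariant picture and build the readout explicitly: you inject orbit-indicator channels into the parallelized feature map and argue that the $C$-part of the last two $M$ layers acts coordinate-wise as a shallow MLP with tied weights, which---thanks to the indicator inputs---is a \emph{standard} shallow MLP in the enlarged input space and hence universal for the switching target $(\text{features},e_k)\mapsto A_k(\text{features})$. Your route is more constructive and makes transparent, at the coordinate level, why two extra copies of $M$ suffice; but it carries the bookkeeping of indicators and the tied-weight MLP argument, whereas the paper's route is cleaner and delegates all approximation work to the already-proved invariant lemma, never needing indicators at all. Both approaches ultimately rely on the propagation of entry-wise stabilization from depth $d{+}1$ to $d{+}2$, which both treat as a consequence of the separation characterization in \cite{pacini_separation_2024}.
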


Since by Theorem~3 of~\cite{pacini_separation_2025}, separation is known to stabilize after a certain depth, we obtain the following corollary.

\begin{corollary}
    \label{cor:equivariant_universality}
    Assume the notation of Theorem~\ref{th:equivariant_universality}.
    There exists a natural number $D$ for which $\cU(M_1, \dots, M_f, \underbrace{M, ..., M}_{d \text{ times}})$ is entry-wise separation-constrained universal for each $d \geq D$.
\end{corollary}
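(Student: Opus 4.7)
The plan is to combine Theorem~3 of \cite{pacini_separation_2024}, which provides finite-depth stabilization of separation relations for equivariant neural networks, with Theorem~\ref{th:equivariant_universality}, which converts any such stabilization into entry-wise separation-constrained universality after two further copies of the output layer~$M$.

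Write $\cN_d := \cU(M_1,\dots,M_h,\underbrace{M,\dots,M}_{d \text{ times}})$ and $\overline\rho_d := \overline\rho(\cN_d) = (\rho_{x,d})_{x \in X}$. The first step is to argue that $\overline\rho_d$ stabilizes in $d$. Theorem~3 of \cite{pacini_separation_2024} provides a finite depth beyond which the separation relation of an equivariant architecture no longer refines under further compositions with the fixed layer space $M$. Applying this result coordinate-wise to each projected family $\pi_{x*}\cN_d$, which inherits the same depth-parameterized architecture augmented by the fixed $G_x$-invariant readout $\pi_x$, yields a stabilization depth $D_x$ for each $x \in X$. Taking $D' := \max_{x \in X} D_x$, which is finite because $X$ is finite, gives $\overline\rho_{d+1} = \overline\rho_d$ for every $d \geq D'$.

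The second step is a direct application of Theorem~\ref{th:equivariant_universality}. For any $d \geq D'$, condition~(\ref{eq:separation-stabilization}) is satisfied with this choice of $d$, hence
\[
\cN_{d+2} = \cC_{\overline \rho_d}(V_0, \R^X) = \cC_{\overline \rho_{D'}}(V_0, \R^X).
\]
Setting $D := D' + 2$, we obtain $\cN_d = \cC_{\overline \rho_{D'}}(V_0,\R^X)$ for every $d \geq D$, which is precisely entry-wise separation-constrained universality.

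The main subtlety requiring care is justifying coordinate-wise stabilization from Theorem~3 of \cite{pacini_separation_2024}. If that result is formulated only for the standard $G$-separation relation at the final layer, I would adapt it by absorbing the fixed projection $\pi_x$ into the architecture as a $G_x$-invariant readout, so that each $\pi_{x*}\cN_d$ falls within the same framework, only under the subgroup $G_x$ in place of $G$. The finiteness of $X$ then converts the individual stabilization depths into a uniform one, and the remainder is mechanical.
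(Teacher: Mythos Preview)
Your proposal is correct and follows essentially the same approach as the paper, which justifies the corollary in a single sentence by invoking Theorem~3 of \cite{pacini_separation_2024} for separation stabilization and then appealing to Theorem~\ref{th:equivariant_universality}. Your write-up is more careful than the paper's in making explicit that \emph{entry-wise} (not merely standard) separation must stabilize, and your proposed reduction---absorbing each projection $\pi_x$ as a fixed $G_x$-invariant readout and taking the maximum stabilization depth over the finite set $X$---is a legitimate way to extract this from the cited result.
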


In a different direction, we can show that entry-wise separation-constrained universality can be achieved when the output layer is a convolutional filter of width $1$, without the requirement of sufficient depth as in Theorem~\ref{th:equivariant_universality} and Corollary~\ref{cor:equivariant_universality}.
This is formalized as follows.

\begin{theorem}    
    \label{th:pseudo-inv-univ}    
    Let $V_0, \dots, V_f$ be permutation representations of a finite group $G$.  
    Let $X$ be a finite $G$-set and $\R^X$ its associated permutation representation.  
    Let $M_1, \dots, M_f$ be layer spaces in $\Aff_G(V_{i-1}, V_i)$ for $i=1,\dots,f$, and let $C$ be a layer space in $\Aff_G(\R^X,\R^X)$ of convolutional filters with width $1$ as defined in Example~\ref{example:layer-spaces}.\ref{example:conv}. Then $ \cU(M_1, \dots, M_f, C) = \cC_{\overline \rho}(V),$
    where $\overline \rho := \overline \rho \bigl( \cU(M_1, \dots, M_f, C) \bigr)$.
\end{theorem}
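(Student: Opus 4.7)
The plan is to prove the two inclusions separately. The easy direction $\cU(M_1, \dots, M_f, C) \subseteq \cC_{\overline \rho}(V)$ is immediate from the definition of $\overline \rho$: every $f$ in the class satisfies $\pi_x f(v_1) = \pi_x f(v_2)$ whenever $(v_1, v_2) \in \rho_x(\cU(M_1, \dots, M_f, C))$, and since $\cC_{\overline \rho}$ is closed in the topology of uniform convergence on compacts (as a finite intersection of preimages of diagonals under continuous coordinate evaluations), the closure in the definition of $\cU$ is absorbed.

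For the reverse inclusion, I would decompose an equivariant target $F \in \cC_{\overline \rho}(V_0, \R^X)$ by coordinate projections. Writing $X = X_1 \sqcup \cdots \sqcup X_\ell$ with orbit representatives $x_1, \dots, x_\ell$, the equivariance of $F$ determines it from the $G_{x_j}$-invariant projections $F_{x_j} := \pi_{x_j} F$, each of which respects $\rho_{x_j}$. A direct expansion shows that, at coordinate $x_j$, a network in $\cN(M_1^{1 \times h_1}, \dots, M_f^{h_{f-1} \times h_f}, C^{h_f \times 1})$ takes the form
\[
v \;\longmapsto\; \sum_{i=1}^{h_f} \alpha_i \, \sigma\bigl(L_i^{(j)}(\xi(v))\bigr) \;+\; \beta_j,
\]
where $\xi(v) \in V_{f-1}^{h_{f-1}}$ is the pre-$M_f$ activated feature, $L_i^{(j)} := \pi_{x_j} \circ \phi_f^i$ is a $G_{x_j}$-invariant affine functional drawn from $M_f$, and $\alpha_i, \beta_j$ are the free parameters of the final $C$-layer. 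This is precisely the shape of a shallow $G_{x_j}$-invariant MLP on the features $\xi$; by transposing the argument of Theorem~\ref{th:invariant_universality} into the $G_{x_j}$-equivariant category and invoking the classical universal approximation theorem of \citet{pinkus_approximation_1999}, one obtains that $\pi_{x_j} \cU(M_1, \dots, M_f, C)$ is dense in $\cC_{G_{x_j}, \rho_{x_j}}(V_0, \R)$—the density is tight here because $\rho_{x_j}$ is, by definition, the separation of the very same class.

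The final and most delicate step is synchronization across orbits: the coefficients $\alpha_i$ are shared among all output coordinates, whereas only the orbit biases $\beta_j$ are per-orbit. I would parallelize (Lemma~\ref{lemma:countable_family}) the pre-final network so that its channels split into orbit-specific blocks $\{\eta^{(j,i)}\}_i$, each block tuned via the shallow-MLP structure above to approximate $F_{x_j}$. The inter-block interference at a coordinate $x_{j'}$ with $j' \neq j$ is then controlled by choosing $\phi_f^{j,i} \in M_f$ so that $\pi_{x_{j'}} \phi_f^{j,i}(\xi(v))$ is approximately independent of $v$ on the compact set of interest, after which the orbit-bias terms $\beta_{j'} \mathbbm 1_{X_{j'}}$ of $C$ absorb the residual constants. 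The hard part will be securing the existence of such orbit-concentrated channels; this relies on the orbit decomposition $\R^X = \bigoplus_{j'} \R^{X_{j'}}$ together with the orbit-indicator summands $\mathbbm 1_{X_{j'}}$ appearing in the defining form~(\ref{eq:layer_charact}) of $M_f$, which—combined with parallelization—produce channels whose useful component lives on $X_j$ while their components on other orbits can be made constant and hence absorbable by the biases of $C$.
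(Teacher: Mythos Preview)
Your per-coordinate reduction is the same idea as the paper's, but you are working harder than necessary. The paper's proof is three lines: it observes that projecting the final $C$-layer onto a coordinate $x$ collapses it to $I_1=L$ acting on the $x$-th entry (since $\pi_x\bigl(\alpha\,\id + \sum_j \beta_j \mathbbm 1_{X_j}\bigr) = \alpha\,\pi_x + \beta_{j(x)}$), so that $\pi_{x*}\cU(M_1,\dots,M_f,C)=\cU(M_1,\dots,M_{f-1},\pi_{x*}M_f,I_1)$, which is exactly the setting of Lemma~\ref{lemma:pseudo-inv-univ}. You rediscover this shallow-MLP structure by expanding coordinates, but the clean statement is simply that $\pi_{x*}C$ \emph{is} the $L$-readout, after which the per-coordinate claim $\pi_{x*}\cU = \cC_{\rho_x}(V)$ follows from the already-proved invariant machinery with no further work.

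Your synchronization step, however, contains a genuine gap. You claim one can choose $\phi_f^{j,i}\in M_f$ so that the pre-activation $\pi_{x_{j'}}\phi_f^{j,i}(\xi(v))$ is approximately independent of $v$ on the compact of interest, attributing this to the orbit-indicator summands in~(\ref{eq:layer_charact}). But those summands are \emph{biases}: they shift the pre-activation on $X_{j'}$ by a constant, they do not kill its $v$-dependence. For the pre-activation to be constant on $X_{j'}$ you would need the linear part $\sum_i x_i\,\pi_{X_{j'}}\phi^i$ to vanish, and nothing in Definition~\ref{def:layer-spaces} forces the fixed generators $\phi^1,\dots,\phi^k$ of $M_f$ to admit orbit-supported linear combinations---the coefficients $x_i$ are shared across all orbits. (Take $M_f$ generated by a single $\phi^1$ that is nonzero on every orbit: no nonzero choice of $x_1$ gives an orbit-concentrated linear part.) Parallelization does not rescue this, since widening $h_{f-1}$ still draws from the same generator set. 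The paper does not attempt such a channel-level construction at all; it instead invokes the decomposition~(\ref{eq:equiv-decomposition}) together with the reconstruction isomorphism of Proposition~\ref{prop:equiv_charact}, which in the transitive case makes $\pi_{x*}$ a bijection $\cC_G(V,\R^X)\to\cC_{G_x}(V)$ and removes the synchronization issue entirely. If you want to keep your explicit multi-orbit argument, you will need a different mechanism than constant pre-activations---for instance, pushing the off-orbit biases to a flat region of $\sigma$ (which requires an assumption on $\sigma$), or appealing directly to the closed-linear-subspace structure of $\cU$ together with~(\ref{eq:equiv-decomposition}) as the paper does.
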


Note that when $C$ is defined on a one-dimensional space, we have $C = L$, and $M^d$ becomes the invariant layer space $I$. 
In this case, Theorem~\ref{th:pseudo-inv-univ}, which is formulated in the equivariant setting, specializes to Theorem~\ref{th:invariant_universality}---the corresponding result in the invariant setting. 

At first sight, it may be tempting to compare Theorem~\ref{th:equivariant_universality} and Theorem~\ref{th:pseudo-inv-univ} and conclude that Theorem~\ref{th:pseudo-inv-univ} is a stronger statement. 
However, it is important to note that adding the $C$ layer space at the end does not change the entry-wise separation power of the model class, whereas adding a certain number of $M$ layers may increase it. 
Theorem~\ref{th:equivariant_universality} explicitly accounts for this effect. 

Theorem~\ref{th:equivariant_universality} and Corollary~\ref{cor:equivariant_universality} may be particularly relevant for their practical implications: they ensure that maximal expressivity is reached at finite depth and rule out the possibility of unbounded improvement. Theorem~\ref{th:pseudo-inv-univ}, on the other hand, is instrumental in recovering known results such as \citep{segol_universal_2020}.
It also shows that universality stabilization in Theorem~\ref{th:equivariant_universality} and Corollary~\ref{cor:equivariant_universality} can occur at the same depth as entry-wise separation stabilization, revealing that the threshold in Theorem~\ref{th:equivariant_universality} is not always optimal. 



\begin{remark}
    \label{rmk:subopt-threshold}
    Thanks to Theorem~\ref{th:pseudo-inv-univ}, we can easily recover the universality result of \cite{segol_universal_2020}. Namely, $\cU (C, P, C) = \cU (P, P, P) = \cC_{S_n}(\R^n, \R^n).$
    It remains to verify that $\pi_i^* \cN(C, P, C)$ separates $\mathrm{Stab}_{S_{n}}(i)$-orbits in $\R^n$, 
    which follows directly from Lemma~\ref{lemma:stabilizer-orbits} (Appendix~\ref{appendix:entry-wisesep-univ}).

    Note that this shows that the depth threshold required for separation-stability in Theorem~\ref{th:equivariant_universality} provides a \emph{sufficient}, but not \emph{necessary}, condition for universality.  
    Indeed, $ \overline \rho \bigl(\cU(V, \R^G, \R^G, W)\bigr) = \overline \rho \bigl(\cU(V, \R^G, W)\bigr)$,
    so separation has not stabilized, yet entry-wise separation universality is already achieved.  
    However, determining in general when separation stabilization takes place is a difficult problem.  
    Corollary~\ref{cor:equivariant_universality} guarantees that maximal expressivity is reached after a finite number of steps and then saturates.  
    This result supports the intuition that increasing depth enhances expressivity.  
    Less intuitively, it also shows that beyond a certain threshold, saturation occurs and further increases in depth no longer affect the universality class.  
\end{remark}

\begin{remark}
    Theorem~\ref{th:pseudo-inv-univ} marks a significant difference between the equivariant and the invariant cases.
    Indeed, \cite{pacini_universality_2025} shows that, although 
    $\rho(\cU(C,I)) = \rho(\cU(P,I)) = \rho(\cC_{S_n}(\R^n,\R))$,  
    the corresponding universality classes satisfy $\cU(C,I) \subsetneq \cU(P,I) \subsetneq \cC_{S_n}(\R^n,\R)$.
    These strict inequalities are proved via a characterization through differential operators.
    In the equivariant case, we have $\cU(C, C) \subsetneq \cU(P, C) \subseteq \cC_{S_n}(\R^n, \R^n)$, yet as we showed here, both spaces can be characterized in terms of entry-wise separation, without resorting to the differential operator characterization.  
    Note that we expect this to be a phenomenon specific to networks with output layers in $\Aff_G(\R^X, \R^X)$.  
    Output spaces in $\Aff_G(\R^X, \R)$, or more generally in $\Aff_G(\R^X, \R^Y)$, may instead require a characterization in terms of differential operators for arbitrary finite $G$-sets $Y$. 
\end{remark}

\section{Limitations}

Our results characterize universality of deep invariant and equivariant networks under separation constraints, but several limitations remain which provide avenues for future work.  
First, the theory applies to networks with point-wise activations and permutation representations.  
Extending the analysis to other types of representations or more general nonlinearities may require different approaches.  
Second, our universality theorems are asymptotic and do not provide quantitative approximation rates or sample complexity bounds, which are important for understanding expressivity in practice.    
Finally, we have not addressed optimization or trainability: while depth is shown to be sufficient for universality, when such networks can be efficiently trained remains an open question.  

\section{Conclusions}

We established new universality results for deep invariant and equivariant networks.  
For invariance, we proved that depth is sufficient to guarantee universality within the class of separation-respecting functions.  
For equivariance, we introduced the refined concept of entry-wise separability and showed that, once entry-wise relations stabilize, deep equivariant networks achieve universality.  
Taken together, these results unify and extend prior shallow or architecture-specific universality theorems, highlighting depth as a general mechanism for universality in equivariant models.  
We hope this framework provides a basis for future advances in the analysis and design of expressive, symmetry-aware, neural networks.

\newpage

\section*{Acknowledgments}

This work was carried out while Marco Pacini was visiting the Geometric Learning Lab at Northeastern University.
The work of Mircea Petrache was supported by National Center for Artificial Intelligence CENIA FB210017, Basal ANID. 
The work of Bruno Lepri was partially supported by the following projects: Horizon Europe Programme, grants \#10112- 0237-ELIAS and \#101120763-TANGO. 
Funded by the European Union. 
Views and opinions expressed are however those of the author(s) only and do not necessarily reflect those of the European Union or the European Health and Digital Executive Agency (HaDEA). 
Neither the European Union nor the granting authority can be held responsible for them. 
This work was also partly supported by Ministero delle Imprese e del Made in Italy (IPCEI Cloud DM 27 giugno 2022 – IPCEI-CL-0000007) and European Union (Next Generation EU). 

\section*{Reproducibility Statement}

This work is purely theoretical and contains no experiments or datasets. 
All results are formally stated as theorems, propositions, or corollaries, and complete proofs are provided in the main text and appendices. 
Definitions and assumptions are explicitly stated to ensure mathematical clarity, and we reference relevant prior results where appropriate. 
As such, all claims in the paper can be fully verified by checking the provided proofs.

\section*{Ethics Statement}

This work is theoretical and does not involve experiments with human subjects, sensitive data, or deployment of models in real-world applications. 
We therefore do not foresee any direct ethical concern.


\begin{thebibliography}{37}
\providecommand{\natexlab}[1]{#1}
\providecommand{\url}[1]{\texttt{#1}}
\expandafter\ifx\csname urlstyle\endcsname\relax
  \providecommand{\doi}[1]{doi: #1}\else
  \providecommand{\doi}{doi: \begingroup \urlstyle{rm}\Url}\fi

\bibitem[Battiloro et~al.(2025)Battiloro, Karaismailoğlu, Tec, Dasoulas, Audirac, and Dominici]{battiloro_en_2025}
Claudio Battiloro, Ege Karaismailoğlu, Mauricio Tec, George Dasoulas, Michelle Audirac, and Francesca Dominici.
\newblock E(n) {Equivariant} {Topological} {Neural} {Networks}, February 2025.
\newblock URL \url{http://arxiv.org/abs/2405.15429}.
\newblock arXiv:2405.15429 [cs].

\bibitem[Bogatskiy et~al.(2020)Bogatskiy, Anderson, Offermann, Roussi, Miller, and Kondor]{bogatskiy_lorentz_2020}
Alexander Bogatskiy, Brandon Anderson, Jan Offermann, Marwah Roussi, David Miller, and Risi Kondor.
\newblock Lorentz {Group} {Equivariant} {Neural} {Network} for {Particle} {Physics}.
\newblock In \emph{Proceedings of the 37th {International} {Conference} on {Machine} {Learning}}, pp.\  992--1002. PMLR, November 2020.
\newblock URL \url{https://proceedings.mlr.press/v119/bogatskiy20a.html}.

\bibitem[Bronstein et~al.(2021)Bronstein, Bruna, Cohen, and Veličković]{bronstein_geometric_2021}
Michael~M. Bronstein, Joan Bruna, Taco Cohen, and Petar Veličković.
\newblock Geometric {Deep} {Learning}: {Grids}, {Groups}, {Graphs}, {Geodesics}, and {Gauges}.
\newblock \emph{arXiv:2104.13478 [cs, stat]}, May 2021.
\newblock URL \url{http://arxiv.org/abs/2104.13478}.
\newblock arXiv: 2104.13478.

\bibitem[Calin(2020)]{calin_deep_2020}
Ovidiu Calin.
\newblock \emph{Deep {Learning} {Architectures}: {A} {Mathematical} {Approach}}.
\newblock Springer Publishing Company, Incorporated, 1st edition, 2020.
\newblock ISBN 978-3-030-36720-6.

\bibitem[Chen et~al.(2019)Chen, Villar, Chen, and Bruna]{chen_equivalence_2019}
Zhengdao Chen, Soledad Villar, Lei Chen, and Joan Bruna.
\newblock On the equivalence between graph isomorphism testing and function approximation with {GNNs}, May 2019.
\newblock URL \url{http://arxiv.org/abs/1905.12560}.
\newblock arXiv:1905.12560 [cs, stat].

\bibitem[Cohen \& Welling(2016)Cohen and Welling]{cohen_group_2016}
Taco Cohen and Max Welling.
\newblock Group {Equivariant} {Convolutional} {Networks}.
\newblock In \emph{Proceedings of {The} 33rd {International} {Conference} on {Machine} {Learning}}, pp.\  2990--2999. PMLR, June 2016.
\newblock URL \url{https://proceedings.mlr.press/v48/cohenc16.html}.

\bibitem[Dym \& Gortler(2022)Dym and Gortler]{dym_low_2022}
Nadav Dym and Steven~J. Gortler.
\newblock Low {Dimensional} {Invariant} {Embeddings} for {Universal} {Geometric} {Learning}, May 2022.
\newblock URL \url{http://arxiv.org/abs/2205.02956}.
\newblock arXiv:2205.02956 [cs, math].

\bibitem[Finzi et~al.(2021)Finzi, Benton, and Wilson]{finzi_residual_2021}
Marc Finzi, Gregory Benton, and Andrew~G Wilson.
\newblock Residual {Pathway} {Priors} for {Soft} {Equivariance} {Constraints}.
\newblock In \emph{Advances in {Neural} {Information} {Processing} {Systems}}, volume~34, pp.\  30037--30049. Curran Associates, Inc., 2021.
\newblock URL \url{https://proceedings.neurips.cc/paper/2021/hash/fc394e9935fbd62c8aedc372464e1965-Abstract.html}.

\bibitem[Fuchs et~al.(2020)Fuchs, Worrall, Fischer, and Welling]{fuchs_se3-transformers_2020}
Fabian Fuchs, Daniel Worrall, Volker Fischer, and Max Welling.
\newblock {SE}(3)-{Transformers}: {3D} {Roto}-{Translation} {Equivariant} {Attention} {Networks}.
\newblock \emph{Advances in Neural Information Processing Systems}, 33:\penalty0 1970--1981, 2020.
\newblock URL \url{https://proceedings.neurips.cc/paper/2020/hash/15231a7ce4ba789d13b722cc5c955834-Abstract.html?utm_source=chatgpt.com}.

\bibitem[Geerts(2020)]{geerts_expressive_2020}
Floris Geerts.
\newblock The expressive power of kth-order invariant graph networks, July 2020.
\newblock URL \url{http://arxiv.org/abs/2007.12035}.
\newblock arXiv:2007.12035 [cs, math, stat].

\bibitem[Geerts \& Reutter(2022)Geerts and Reutter]{geerts_expressiveness_2022}
Floris Geerts and Juan~L Reutter.
\newblock Expressiveness and {Approximation} {Properties} of {Graph} {Neural} {Networks}.
\newblock \emph{Preprint}, pp.\ ~43, 2022.

\bibitem[Huang et~al.(2023)Huang, Howell, Wang, Zhu, Platt, and Walters]{huang_fourier_2023}
Haojie Huang, Owen~Lewis Howell, Dian Wang, Xupeng Zhu, Robert Platt, and Robin Walters.
\newblock Fourier {Transporter}: {Bi}-{Equivariant} {Robotic} {Manipulation} in {3D}.
\newblock October 2023.
\newblock URL \url{https://openreview.net/forum?id=UulwvAU1W0}.

\bibitem[Joshi et~al.(2023)Joshi, Bodnar, Mathis, Cohen, and Lio]{joshi_expressive_2023}
Chaitanya~K. Joshi, Cristian Bodnar, Simon~V. Mathis, Taco Cohen, and Pietro Lio.
\newblock On the {Expressive} {Power} of {Geometric} {Graph} {Neural} {Networks}.
\newblock \emph{International Conference of Learning Representations}, 2023.
\newblock URL \url{https://openreview.net/forum?id=Rkxj1GXn9_}.

\bibitem[Jumper et~al.(2021)Jumper, Evans, Pritzel, Green, Figurnov, Ronneberger, Tunyasuvunakool, Bates, Žídek, Potapenko, Bridgland, Meyer, Kohl, Ballard, Cowie, Romera-Paredes, Nikolov, Jain, Adler, Back, Petersen, Reiman, Clancy, Zielinski, Steinegger, Pacholska, Berghammer, Bodenstein, Silver, Vinyals, Senior, Kavukcuoglu, Kohli, and Hassabis]{jumper_highly_2021}
John Jumper, Richard Evans, Alexander Pritzel, Tim Green, Michael Figurnov, Olaf Ronneberger, Kathryn Tunyasuvunakool, Russ Bates, Augustin Žídek, Anna Potapenko, Alex Bridgland, Clemens Meyer, Simon A.~A. Kohl, Andrew~J. Ballard, Andrew Cowie, Bernardino Romera-Paredes, Stanislav Nikolov, Rishub Jain, Jonas Adler, Trevor Back, Stig Petersen, David Reiman, Ellen Clancy, Michal Zielinski, Martin Steinegger, Michalina Pacholska, Tamas Berghammer, Sebastian Bodenstein, David Silver, Oriol Vinyals, Andrew~W. Senior, Koray Kavukcuoglu, Pushmeet Kohli, and Demis Hassabis.
\newblock Highly accurate protein structure prediction with {AlphaFold}.
\newblock \emph{Nature}, 596\penalty0 (7873):\penalty0 583--589, August 2021.
\newblock ISSN 1476-4687.
\newblock \doi{10.1038/s41586-021-03819-2}.
\newblock URL \url{https://www.nature.com/articles/s41586-021-03819-2}.

\bibitem[Keriven \& Peyré(2019)Keriven and Peyré]{keriven_universal_2019}
Nicolas Keriven and Gabriel Peyré.
\newblock Universal {Invariant} and {Equivariant} {Graph} {Neural} {Networks}.
\newblock In \emph{Advances in {Neural} {Information} {Processing} {Systems}}, volume~32. Curran Associates, Inc., 2019.
\newblock URL \url{https://papers.nips.cc/paper_files/paper/2019/hash/ea9268cb43f55d1d12380fb6ea5bf572-Abstract.html}.

\bibitem[Kondor \& Trivedi(2018)Kondor and Trivedi]{kondor_generalization_2018}
Risi Kondor and Shubhendu Trivedi.
\newblock On the {Generalization} of {Equivariance} and {Convolution} in {Neural} {Networks} to the {Action} of {Compact} {Groups}.
\newblock In \emph{Proceedings of the 35th {International} {Conference} on {Machine} {Learning}}, pp.\  2747--2755. PMLR, July 2018.
\newblock URL \url{https://proceedings.mlr.press/v80/kondor18a.html}.

\bibitem[Lafarge et~al.(2021)Lafarge, Bekkers, Pluim, Duits, and Veta]{lafarge_roto-translation_2021}
Maxime~W. Lafarge, Erik~J. Bekkers, Josien P.~W. Pluim, Remco Duits, and Mitko Veta.
\newblock Roto-translation equivariant convolutional networks: {Application} to histopathology image analysis.
\newblock \emph{Medical Image Analysis}, 68:\penalty0 101849, February 2021.
\newblock ISSN 1361-8415.
\newblock \doi{10.1016/j.media.2020.101849}.
\newblock URL \url{https://www.sciencedirect.com/science/article/pii/S1361841520302139}.

\bibitem[Maron et~al.(2018)Maron, Ben-Hamu, Shamir, and Lipman]{maron_invariant_2018}
Haggai Maron, Heli Ben-Hamu, Nadav Shamir, and Yaron Lipman.
\newblock Invariant and {Equivariant} {Graph} {Networks}.
\newblock In \emph{International {Conference} on {Learning} {Representations}}, September 2018.
\newblock URL \url{https://openreview.net/forum?id=Syx72jC9tm}.

\bibitem[Maron et~al.(2019{\natexlab{a}})Maron, Ben-Hamu, Serviansky, and Lipman]{maron_provably_2019}
Haggai Maron, Heli Ben-Hamu, Hadar Serviansky, and Yaron Lipman.
\newblock Provably {Powerful} {Graph} {Networks}.
\newblock \emph{International Conference of Learning Representations}, 2019{\natexlab{a}}.

\bibitem[Maron et~al.(2019{\natexlab{b}})Maron, Fetaya, Segol, and Lipman]{maron_universality_2019}
Haggai Maron, Ethan Fetaya, Nimrod Segol, and Yaron Lipman.
\newblock On the {Universality} of {Invariant} {Networks}.
\newblock In \emph{Proceedings of the 36th {International} {Conference} on {Machine} {Learning}}, pp.\  4363--4371. PMLR, May 2019{\natexlab{b}}.
\newblock URL \url{https://proceedings.mlr.press/v97/maron19a.html}.

\bibitem[Morris et~al.(2019)Morris, Ritzert, Fey, Hamilton, Lenssen, Rattan, and Grohe]{morris_weisfeiler_2019}
Christopher Morris, Martin Ritzert, Matthias Fey, William~L. Hamilton, Jan~Eric Lenssen, Gaurav Rattan, and Martin Grohe.
\newblock Weisfeiler and {Leman} {Go} {Neural}: {Higher}-{Order} {Graph} {Neural} {Networks}.
\newblock \emph{Proceedings of the AAAI Conference on Artificial Intelligence}, 33:\penalty0 4602--4609, July 2019.
\newblock ISSN 2374-3468, 2159-5399.
\newblock \doi{10.1609/aaai.v33i01.33014602}.
\newblock URL \url{https://aaai.org/ojs/index.php/AAAI/article/view/4384}.

\bibitem[Pacini et~al.(2024)Pacini, Dong, Lepri, and Santin]{pacini_characterization_2024}
Marco Pacini, Xiaowen Dong, Bruno Lepri, and Gabriele Santin.
\newblock A {Characterization} {Theorem} for {Equivariant} {Networks} with {Point}-wise {Activations}.
\newblock In \emph{The {Twelfth} {International} {Conference} on {Learning} {Representations}}, 2024.

\bibitem[Pacini et~al.(2025{\natexlab{a}})Pacini, Dong, Lepri, and Santin]{pacini_separation_2025}
Marco Pacini, Xiaowen Dong, Bruno Lepri, and Gabriele Santin.
\newblock Separation {Power} of {Equivariant} {Neural} {Networks}.
\newblock In \emph{The {Thirteenth} {International} {Conference} on {Learning} {Representations}}, 2025{\natexlab{a}}.

\bibitem[Pacini et~al.(2025{\natexlab{b}})Pacini, Santin, Lepri, and Trivedi]{pacini_universality_2025}
Marco Pacini, Gabriele Santin, Bruno Lepri, and Shubhendu Trivedi.
\newblock On {Universality} {Classes} of {Equivariant} {Networks}.
\newblock In \emph{The {Thirty}-ninth {Annual} {Conference} on {Neural} {Information} {Processing} {Systems}}, 2025{\natexlab{b}}.

\bibitem[Petrache \& Trivedi(2023)Petrache and Trivedi]{petrache_approximation-generalization_2023}
Mircea Petrache and Shubhendu Trivedi.
\newblock Approximation-{Generalization} {Trade}-offs under ({Approximate}) {Group} {Equivariance}.
\newblock \emph{Advances in Neural Information Processing Systems}, 36:\penalty0 61936--61959, December 2023.
\newblock URL \url{https://proceedings.neurips.cc/paper_files/paper/2023/hash/c35f8e2fc6d81f195009a1d2ae5f6ae9-Abstract-Conference.html}.

\bibitem[Pinkus(1999)]{pinkus_approximation_1999}
Allan Pinkus.
\newblock Approximation theory of the {MLP} model in neural networks.
\newblock \emph{Acta Numerica}, 8:\penalty0 143--195, January 1999.
\newblock ISSN 1474-0508, 0962-4929.
\newblock \doi{10.1017/S0962492900002919}.
\newblock URL \url{https://www.cambridge.org/core/journals/acta-numerica/article/abs/approximation-theory-of-the-mlp-model-in-neural-networks/18072C558C8410C4F92A82BCC8FC8CF9}.

\bibitem[Qi et~al.(2017)Qi, {Su, Hao}, {Mo, Kaichun}, and {Guibas, Leonidas J.}]{qi_pointnet_2017}
Charles~R. Qi, {Su, Hao}, {Mo, Kaichun}, and {Guibas, Leonidas J.}
\newblock {PointNet}: {Deep} {Learning} on {Point} {Sets} for {3D} {Classification} and {Segmentation}.
\newblock In \emph{2017 {IEEE} {Conference} on {Computer} {Vision} and {Pattern} {Recognition} ({CVPR})}, pp.\  77--85, Honolulu, HI, July 2017. IEEE.
\newblock ISBN 978-1-5386-0457-1.
\newblock \doi{10.1109/CVPR.2017.16}.
\newblock URL \url{http://ieeexplore.ieee.org/document/8099499/}.

\bibitem[Ravanbakhsh(2020)]{ravanbakhsh_universal_2020}
Siamak Ravanbakhsh.
\newblock Universal {Equivariant} {Multilayer} {Perceptrons}.
\newblock In \emph{Proceedings of the 37th {International} {Conference} on {Machine} {Learning}}, pp.\  7996--8006. PMLR, November 2020.
\newblock URL \url{https://proceedings.mlr.press/v119/ravanbakhsh20a.html}.

\bibitem[Segol \& Lipman(2020)Segol and Lipman]{segol_universal_2020}
Nimrod Segol and Yaron Lipman.
\newblock On {Universal} {Equivariant} {Set} {Networks}, January 2020.
\newblock URL \url{http://arxiv.org/abs/1910.02421}.
\newblock arXiv:1910.02421 [cs, stat].

\bibitem[Serre(1977)]{serre_linear_1977}
Jean-Pierre Serre.
\newblock \emph{Linear {Representations} of {Finite} {Groups}}, volume~42 of \emph{Graduate {Texts} in {Mathematics}}.
\newblock Springer, New York, NY, 1977.
\newblock ISBN 978-1-4684-9460-0 978-1-4684-9458-7.
\newblock \doi{10.1007/978-1-4684-9458-7}.
\newblock URL \url{http://link.springer.com/10.1007/978-1-4684-9458-7}.

\bibitem[Sonoda et~al.(2022)Sonoda, Ishikawa, and Ikeda]{sonoda_universality_2022}
Sho Sonoda, Isao Ishikawa, and Masahiro Ikeda.
\newblock Universality of group convolutional neural networks based on ridgelet analysis on groups.
\newblock In \emph{Proceedings of the 36th {International} {Conference} on {Neural} {Information} {Processing} {Systems}}, {NIPS} '22, pp.\  38680--38694, Red Hook, NY, USA, November 2022. Curran Associates Inc.
\newblock ISBN 978-1-71387-108-8.

\bibitem[Telgarsky(2016)]{telgarsky_benefits_2016}
Matus Telgarsky.
\newblock Benefits of depth in neural networks, May 2016.
\newblock URL \url{http://arxiv.org/abs/1602.04485}.
\newblock arXiv:1602.04485 [cs, stat].

\bibitem[Thomas et~al.(2018)Thomas, Smidt, Kearnes, Yang, Li, Kohlhoff, and Riley]{thomas_tensor_2018}
Nathaniel Thomas, Tess Smidt, Steven Kearnes, Lusann Yang, Li~Li, Kai Kohlhoff, and Patrick Riley.
\newblock Tensor field networks: {Rotation}- and translation-equivariant neural networks for {3D} point clouds, May 2018.
\newblock URL \url{http://arxiv.org/abs/1802.08219}.
\newblock arXiv:1802.08219 [cs].

\bibitem[{Victor Garcia Satorras} et~al.(2021){Victor Garcia Satorras}, Hoogeboom, and Welling]{victor_garcia_satorras_en_2021}
{Victor Garcia Satorras}, Emiel Hoogeboom, and Max Welling.
\newblock E(n) {Equivariant} {Graph} {Neural} {Networks}.
\newblock In \emph{Proceedings of the 38th {International} {Conference} on {Machine} {Learning}}, pp.\  9323--9332. PMLR, July 2021.
\newblock URL \url{https://proceedings.mlr.press/v139/satorras21a.html}.

\bibitem[Yarotsky(2017)]{yarotsky_error_2017}
Dmitry Yarotsky.
\newblock Error bounds for approximations with deep {ReLU} networks.
\newblock \emph{Neural Networks}, 94:\penalty0 103--114, October 2017.
\newblock ISSN 0893-6080.
\newblock \doi{10.1016/j.neunet.2017.07.002}.
\newblock URL \url{https://www.sciencedirect.com/science/article/pii/S0893608017301545}.

\bibitem[Yarotsky(2018)]{yarotsky_optimal_2018}
Dmitry Yarotsky.
\newblock Optimal approximation of continuous functions by very deep {ReLU} networks, June 2018.
\newblock URL \url{http://arxiv.org/abs/1802.03620}.
\newblock arXiv:1802.03620 [cs].

\bibitem[Zaheer et~al.(2017)Zaheer, Kottur, Ravanbakhsh, Poczos, Salakhutdinov, and Smola]{zaheer_deep_2017}
Manzil Zaheer, Satwik Kottur, Siamak Ravanbakhsh, Barnabas Poczos, Russ~R Salakhutdinov, and Alexander~J Smola.
\newblock Deep {Sets}.
\newblock In \emph{Advances in {Neural} {Information} {Processing} {Systems}}, volume~30. Curran Associates, Inc., 2017.
\newblock URL \url{https://papers.nips.cc/paper_files/paper/2017/hash/f22e4747da1aa27e363d86d40ff442fe-Abstract.html}.

\end{thebibliography}

\appendix

\section{Preliminaries}
\label{section:preliminaries_appendix}

\rev{

\begin{definition}[Group]\label{def_group}
A \emph{group} is a set $G$ together with a binary operation $\cdot : G \times G \to G$ such that:
\begin{itemize}
    \item \textbf{Associativity:} for all $g,h,k \in G$ we have $(g \cdot h) \cdot k = g \cdot (h \cdot k)$.
    \item \textbf{Identity element:} there exists an element $e \in G$ such that $g \cdot e = e \cdot g = g$ for every $g \in G$.
    \item \textbf{Inverses:} for every $g \in G$ there exists an element $g^{-1} \in G$ such that $g \cdot g^{-1} = g^{-1} \cdot g = e$.
\end{itemize}
The group is \emph{finite} if $G$ has finitely many elements.
\end{definition}

We next recall the notion of a group homomorphism, which is a structure-preserving map between groups.

\begin{definition}[Homomorphism]\label{def_group_homomorphism}
Let $G$ and $H$ be groups.
A function $\phi : G \to H$ is called a \emph{group homomorphism} if, for all $g,h \in G$, it satisfies
\[
    \phi(g \cdot h) = \phi(g) \cdot \phi(h) \text{.}
\]
\end{definition}

\begin{definition}[Group Actions]
Let $G$ be a group and let $X$ be a set.
A \emph{group action} of $G$ on $X$ is a map
\[
    \Phi : G \times X \to X \text{,}
\]
often written as $\phi_g(x) = \Phi(g,x)$ for $g \in G$ and $x \in X$, such that:
\begin{itemize}
    \item \textbf{Identity:} $\phi_e = \mathrm{id}_X$, where $e$ is the identity element of $G$.
    \item \textbf{Compatibility:} for all $g,h \in G$ we have $\phi_g \circ \phi_h = \phi_{gh}$.
\end{itemize}
In practice, we frequently denote the action by $g \cdot x$ or simply $gx$ instead of $\phi_g(x)$.

A set $X$ together with a group action of $G$ is called a \emph{$G$-set}.
Equivalently, $X$ is a $G$-set if there exists an action $\cdot : G \times X \to X$ satisfying the identity and compatibility conditions above.
\end{definition}

A central notion in our analysis is that of a map between $G$-sets that respects the group action, leading to the following definition.

\begin{definition}[Equivariance]\label{equivariant_map}
Let $X$ and $Y$ be $G$-sets.
A function $f : X \to Y$ is \emph{$G$-equivariant} if, for all $g \in G$ and $x \in X$, we have
\[
    f(g \cdot x) = g \cdot f(x) \text{.}
\]
\end{definition}

\begin{definition}[Group Representations]  
Let $G$ be a group and let $V$ be a vector space over $\R$.
A \emph{representation} of $G$ on $V$ is a group homomorphism
\[
    \phi : G \to \mathrm{GL}(V) \text{,}
\]
where $\mathrm{GL}(V)$ denotes the group of invertible linear maps $V \to V$.
Given such a homomorphism, we obtain a linear action of $G$ on $V$ by setting
\[
    gv \coloneqq \phi(g)(v) \quad \text{for } g \in G,\ v \in V \text{.}
\]
\end{definition}

When $V$ and $W$ are $G$-representations, we denote by $\Hom_G(V,W)$ the space of $G$-equivariant linear maps $V \to W$, and by $\Aff_G(V,W)$ the set of $G$-equivariant affine maps $V \to W$.

Note that $\Hom_G(V,W)$ is a vector space.
Indeed, $0 \in \Hom_G(V,W)$, and for any $f,g \in \Hom_G(V,W)$ and $\alpha,\beta \in \R$, the linear combination $\alpha f + \beta g$ is still in $\Hom_G(V,W)$.
The same property holds for $\Aff_G(V,W)$.

In this paper, we mainly work with permutation representations. 
This is not merely a mild restriction when considering point-wise activations: as shown in \cite{pacini_characterization_2024}, the only representations of compact groups that are compatible with all continuous activation functions are permutation representations.

\begin{definition}[Permutation Representations]
Let $X$ be a finite set and let $G$ be a finite group acting on $X$.
The associated \emph{permutation representation} of $G$ is the linear action of $G$ on $\R^X$ defined on the standard basis $\{ e_x \}_{x \in X}$ by
\[
    g(e_x) = e_{g \cdot x} \quad \text{for all } g \in G,\ x \in X \text{.}
\]
\end{definition}

We now make explicit how this fits the general notion of a representation.

\begin{prop}
Let $X$ and $G$ be as above and set $V \coloneqq \R^X$.
For each $g \in G$ there is a unique linear map
\[
    \phi(g) : V \to V
\]
such that $\phi(g)(e_x) = e_{g \cdot x}$ for all $x \in X$.
Then the map
\[
    \phi : G \to \mathrm{GL}(V)\,, \quad g \mapsto \phi(g)
\]
is a representation of $G$ on $V$.
\end{prop}

\begin{proof}
Since $\{e_x\}_{x \in X}$ is a basis of $V = \R^X$, for each $g \in G$ there exists a unique linear map $\phi(g) : V \to V$ such that $\phi(g)(e_x) = e_{g \cdot x}$ for all $x \in X$.
Moreover,
\[
    \phi(g^{-1})(\phi(g)(e_x)) = \phi(g^{-1})(e_{g \cdot x}) = e_{g^{-1} \cdot (g \cdot x)} = e_x \text{,}
\]
so $\phi(g^{-1})$ is the inverse of $\phi(g)$ and $\phi(g) \in \mathrm{GL}(V)$.

For $g,h \in G$ and any $x \in X$ we have
\[
    (\phi(g)\phi(h))(e_x) = \phi(g)(e_{h \cdot x}) = e_{g \cdot (h \cdot x)} = e_{(gh) \cdot x} = \phi(gh)(e_x) \text{,}
\]
hence $\phi(g)\phi(h) = \phi(gh)$ and $\phi : G \to \mathrm{GL}(V)$ is a group homomorphism.
\end{proof}

Moreover, after choosing an ordering $X = \{x_1,\dots,x_n\}$ and identifying $V \cong \R^n$, each $\phi(g)$ is represented by a permutation matrix $P_g \in \R^{n \times n}$ with entries
\[
    (P_g)_{ij} = 1 \ \text{if } g \cdot x_j = x_i \text{, and } (P_g)_{ij} = 0 \ \text{otherwise.}
\]
In particular, $P_{gh} = P_g P_h$ and $P_e = I_n$, so $g \mapsto P_g$ is a group homomorphism into $\mathrm{GL}_n(\R)$.

}

\section{Separation-constrained Universality for Invariant Networks}\label{app:a}

\rev{

We recall the notation introduced in~\eqref{eq:def_m_h} for the subspace $M$ of $\Aff_G(\R^X, \R^Y)$.  
\begin{align*}    
M^{k \times h} := 
    \left\{     
        \begin{array}{c}         
        (x_1, \dots, x_k) \mapsto \left( \sum_{j = 1}^k f_{1, j}(x_j), \dots, \sum_{j = 1}^k f_{h, j}(x_j) \right) \\         
        f_{ij} \in M, i = 1, \dots, h, j = 1, \dots, k     
        \end{array}     
    \right\}.
\end{align*}
Thus, $M^{k \times h} \subseteq \Aff_G(\R^X \otimes \R^k, \R^Y \otimes \R^h)$.
In particular, for each $f_{ij}$ in the above definition, we write
\[
    f_{ij}(x) = A_{ij} x + b_{ij} \text{,}
\]
where $A_{ij}$ and $b_{ij}$ denote respectively the linear part and the translational part of $f_{ij}$.
With this notation, the linear and translational parts of an element in $M^{k \times h}$ can be written respectively as
\[
    \begin{bmatrix}
    A_{11} & A_{12} & \cdots & A_{1k} \\
    A_{21} & A_{22} & \cdots & A_{2k} \\
    \vdots & \vdots & \ddots & \vdots \\
    A_{h1} & A_{h2} & \cdots & A_{hk}
    \end{bmatrix}
    \quad \text{and} \quad
    \begin{bmatrix}
    \sum_{j=1}^k b_{1j} \\
    \sum_{j=1}^k b_{2j} \\
    \vdots \\
    \sum_{j=1}^k b_{hj}
    \end{bmatrix}
    \text{.}
\]

}

The following definitions and lemmas are used for the proof of Theorem~\ref{th:invariant_universality}.

\rev{
\begin{lemma}
    \label{lemma:countable_family}
    Let $f_1, \dots, f_h \in \cN(M_1, \dots, M_d, I)$ then their parallelization $(f_1, \dots, f_h)$ belongs to $\bigcup_{\tilde k \in \N^{d+1}} \cN (M_1^{1 \times k_1}, \dots, M_d^{{k_{d-1}} \times h})$.
\end{lemma}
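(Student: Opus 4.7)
The plan is to realize the parallelization $(f_1,\dots,f_h)$ as a single network whose internal layers are block-diagonal assemblies of the corresponding layers of the $f_i$'s, with only the first layer acting by broadcast from a single input copy to $h$ copies. The natural choice of multiplicity vector will be $\tilde k = (h, h, \dots, h) \in \N^{d+1}$.

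First, I would write each $f_i \in \cN(M_1, \dots, M_d, I)$ in its canonical layered form
\[
    f_i = \phi_i^{d+1} \circ \tilde \sigma \circ \phi_i^d \circ \tilde \sigma \circ \cdots \circ \tilde \sigma \circ \phi_i^1,
\]
with $\phi_i^j \in M_j$ for $j=1,\dots,d$ and $\phi_i^{d+1} \in I$. Then I would assemble the parallel network layer by layer. The first layer $\Phi^1 : V_0 \to V_1^{\oplus h}$ is the broadcast map $v \mapsto (\phi_1^1(v), \dots, \phi_h^1(v))$, which lies in $M_1^{1 \times h}$ by definition~(\ref{eq:def_m_h}). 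For each $j = 2, \dots, d$, the layer $\Phi^j : V_{j-1}^{\oplus h} \to V_j^{\oplus h}$ is the block-diagonal map with diagonal blocks $\phi_1^j, \dots, \phi_h^j$, obtained from~(\ref{eq:def_m_h}) by zeroing out the off-diagonal summands, and hence an element of $M_j^{h \times h}$. The final layer $\Phi^{d+1} : V_d^{\oplus h} \to \R^h$ is the analogous block-diagonal assembly of the $\phi_i^{d+1}$, lying in $I^{h \times h}$.

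Next, I would verify that the point-wise activations respect the direct-sum structure: since $V_j^{\oplus h}$ is itself a permutation representation, supported on the disjoint union of $h$ copies of the $G$-set underlying $V_j$, the activation $\tilde \sigma$ on $V_j^{\oplus h}$ factors as $(\tilde \sigma, \dots, \tilde \sigma)$ across the summands. It then follows by direct computation that $\Phi^{d+1} \circ \tilde \sigma \circ \cdots \circ \tilde \sigma \circ \Phi^1$ evaluates to the tuple $(f_1(v), \dots, f_h(v))$, placing the parallelization in $\cN(M_1^{1 \times h}, M_2^{h \times h}, \dots, M_d^{h \times h}, I^{h \times h})$, and therefore in the claimed union.

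The main obstacle is purely notational bookkeeping: one has to check carefully that the broadcast assembly and the block-diagonal assemblies conform to the precise form prescribed by~(\ref{eq:def_m_h}) for $M_j^{k_{j-1} \times k_j}$, and that the $G$-equivariant point-wise activations commute with the direct-sum splitting of permutation representations. No genuine mathematical difficulty arises beyond this verification, so the construction yields the lemma immediately.
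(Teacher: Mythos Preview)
Your proposal is correct and follows essentially the same approach as the paper's own proof: a broadcast (block-column) first layer in $M_1^{1\times h}$ together with block-diagonal intermediate and final layers in $M_j^{h\times h}$ and $I^{h\times h}$, yielding the multiplicity vector $(h,\dots,h)$. The paper argues only the case $h=2$ and appeals to analogy for larger $h$, whereas you treat general $h$ directly and add the explicit check that point-wise activations split across the direct sum; otherwise the arguments coincide.
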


\begin{proof}[Proof of Lemma~\ref{lemma:countable_family}]
    Let us consider the case $h = 2$; for $h > 2$, the proof is analogous. 
    
    Let $f_1, f_2 \in \cN(M_1, \dots, M_d)$. 
    Then each affine layer at depth $i$ in $(f_1, f_2)$ is a block diagonal matrix whose first block 
    is the $i$-th layer of $f_1$ and the second is the $i$-th layer of $f_2$; a similar analysis holds 
    for the bias terms. 
    Hence, this layer belongs to $M_i^{2 \times 2}$ for $i > 1$. 
    For $i = 1$, the first linear layer of $(f_1, f_2)$ is a block column matrix where each block is 
    the first layer of $f_1$ and $f_2$; again, a similar analysis holds for the bias terms. 
    Hence, this layer belongs to $M_1^{1 \times 2}$. 
    This shows that 
    \[
        (f_1, f_2) \in \cN(M_1^{1 \times 2}, M_2^{2 \times 2}, \dots, M_d^{2 \times 2}) 
        \subseteq \bigcup_{\tilde{k} \in \N^{d-1}} 
        \cN(M_1^{1 \times k_1}, \dots, M_d^{k_{d-1} \times 2}) .
    \]
\end{proof}
}

\begin{definition}
    \label{def:rational_neural_spaces}
    Let $M_1, \dots, M_d$ be layer spaces. 
    Let $\mathcal B_i$ be bases for the layer space $M_i$, and define 
    \[
    M_i^{\mathbb Q} := \Sp_{\mathbb Q} \mathcal B_i
    \]
    for each $i = 1, \dots, d$.
    Define rational neural spaces as follows:
    \[
        \cN^{\mathbb Q}(M_1, \dots, M_d) := \cN(M_1^{\mathbb Q}, \dots, M_d^{\mathbb Q}).
    \]
    Note that $\cN^{\mathbb Q}(M_1, \dots, M_d)$ \emph{depends} on the choice of the bases $\mathcal B_1, \dots, \mathcal B_d$.
\end{definition}



\begin{lemma}
    \label{lemma:rational_neural_spaces}
    In the notation of Definition~\ref{def:rational_neural_spaces},
    \[
        \rho(\cN(M_1, \dots, M_d)) = \rho(\cN^{\mathbb Q}(M_1, \dots, M_d)).
    \]
    Therefore, $\rho(\cN^{\mathbb Q}(M_1, \dots, M_d))$ does not depend on the choice of bases $\mathcal B_1, \dots, \mathcal B_d$.
\end{lemma}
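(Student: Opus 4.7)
The plan is to prove the equality \(\rho(\cN(M_1, \dots, M_d)) = \rho(\cN^{\mathbb Q}(M_1, \dots, M_d))\) by establishing the two inclusions separately, and then obtain basis-independence as a one-line corollary. The key technical tool is a density argument: rational coefficients are dense in real coefficients, and the map from parameters to network outputs is continuous.

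For the easy inclusion \(\rho(\cN(M_1, \dots, M_d)) \subseteq \rho(\cN^{\mathbb Q}(M_1, \dots, M_d))\), I would simply observe that \(M_i^{\mathbb Q} = \Sp_{\mathbb Q} \mathcal B_i \subseteq \Sp_{\mathbb R} \mathcal B_i = M_i\), so the rational neural space is a subset of the real one. Since the separation relation reverses inclusions of function families (a larger family can only distinguish more pairs of points), the inclusion on separation relations follows.

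For the reverse inclusion, suppose \((\alpha, \beta)\) is separated by some \(f = \phi^d \circ \sigma \circ \cdots \circ \sigma \circ \phi^1 \in \cN(M_1, \dots, M_d)\), so \(f(\alpha) \neq f(\beta)\). Expressing each \(\phi^i\) in the basis \(\mathcal B_i\) identifies \(f\) with a finite-dimensional real parameter vector \(\vec c \in \mathbb R^N\). Because each layer depends polynomially (in fact affinely) on its coefficients and \(\sigma\) is continuous, the map \(\vec c \mapsto f_{\vec c}(\alpha) - f_{\vec c}(\beta)\) is continuous on \(\mathbb R^N\). Since it is nonzero at the given \(\vec c\), it remains nonzero in some open neighborhood, into which — by density of \(\mathbb Q\) in \(\mathbb R\) — I can place a rational parameter vector \(\vec c' \in \mathbb Q^N\). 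The resulting \(f_{\vec c'}\) lies in \(\cN^{\mathbb Q}(M_1, \dots, M_d)\) and still separates \((\alpha, \beta)\), proving the converse inclusion. Basis-independence is then immediate: the left-hand side \(\rho(\cN(M_1,\dots,M_d))\) manifestly does not reference any basis, so neither does the right-hand side.

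The only mildly subtle step is the continuity claim, and the main thing to verify there is that small simultaneous perturbations of \emph{all} layers' coefficients preserve the strict inequality \(f(\alpha) \neq f(\beta)\). I do not expect a real obstacle: continuity follows from composition of continuous operations (affine in the coefficients, then the fixed continuous point-wise activation \(\sigma\)), and equivariance is automatically preserved because each \(M_i^{\mathbb Q}\) is a \(\mathbb Q\)-span of equivariant basis elements.
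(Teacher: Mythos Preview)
Your proposal is correct and follows exactly the approach the paper sketches: the paper's proof is the single line ``By the continuity of the parametrization map and the density of $M_i^{\mathbb Q}$ in $M_i$,'' and your argument is precisely the standard unpacking of that sentence into the two inclusions plus the continuity/density perturbation. There is nothing to add.
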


\begin{proof}[Proof of Lemma~\ref{lemma:rational_neural_spaces}]
    By the continuity of the parametrization map and the density of $M_i^{\mathbb Q}$ in $M_i$.
\end{proof}

\rev{
Lemma~\ref{lemma:rational_neural_spaces} implies the following corollary.

\begin{corollary}
    \label{cor:rational_neural_spaces}
    There exists a countable family $\cF = \{ f_h \}_{h \in \N} \subseteq \cN(M_1, \dots, M_d)$ such that
    \[
        \rho(\cF) = \rho(\cN(M_1, \dots, M_d)).
    \]
\end{corollary}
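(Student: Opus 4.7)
The plan is to combine Lemma~\ref{lemma:rational_neural_spaces} with a cardinality count on the rational neural space. The lemma has already done the hard work of showing that restricting the parameters of each layer to the rational span of a chosen basis preserves the separation relation. What remains is a routine verification that $\cN^{\mathbb Q}(M_1, \dots, M_d)$ is countable, and then simply enumerating it.

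First, I would observe that each layer space $M_i$ is, by Definition~\ref{def:layer-spaces}, finite-dimensional over $\R$, parametrized by the scalars $x_1,\dots,x_k,y_1,\dots,y_\ell$ in (\ref{eq:layer_charact}). Fixing a basis $\mathcal B_i$ of $M_i$, the set $M_i^{\mathbb Q} = \Sp_{\mathbb Q} \mathcal B_i$ consists of all $\mathbb Q$-linear combinations of finitely many basis vectors, and is therefore countable as a finite-dimensional $\mathbb Q$-vector space. Consequently, the finite product $M_1^{\mathbb Q} \times \cdots \times M_d^{\mathbb Q}$ is countable, and the parametrization map $(\phi^1,\dots,\phi^d) \mapsto \phi^d \circ \tilde\sigma \circ \cdots \circ \tilde\sigma \circ \phi^1$ sends this countable set onto $\cN^{\mathbb Q}(M_1,\dots,M_d)$. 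Thus $\cN^{\mathbb Q}(M_1,\dots,M_d)$ is itself countable.

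I would then enumerate the elements of $\cN^{\mathbb Q}(M_1,\dots,M_d)$ as $\cF = \{ f_h \}_{h \in \N}$, which clearly sits inside $\cN(M_1,\dots,M_d)$ since $M_i^{\mathbb Q} \subseteq M_i$ for every $i$. By definition of separation, $\rho(\cF) = \rho(\cN^{\mathbb Q}(M_1,\dots,M_d))$, and Lemma~\ref{lemma:rational_neural_spaces} gives $\rho(\cN^{\mathbb Q}(M_1,\dots,M_d)) = \rho(\cN(M_1,\dots,M_d))$. Chaining the two equalities yields the claim.

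There is no real obstacle here: the statement is a direct packaging of Lemma~\ref{lemma:rational_neural_spaces} together with elementary countability. The only point that requires a brief check is that the rational restriction of each individual layer space is countable, which is immediate since each $M_i$ has a finite basis. In particular, no density, continuity, or topological arguments beyond those already embedded in Lemma~\ref{lemma:rational_neural_spaces} are needed at this stage.
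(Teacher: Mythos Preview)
Your argument is correct and follows exactly the route the paper intends: the paper simply states that Lemma~\ref{lemma:rational_neural_spaces} implies the corollary, and you have spelled out the implicit countability check on $\cN^{\mathbb Q}(M_1,\dots,M_d)$ that makes this implication go through. There is nothing to add or adjust.
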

}

\begin{lemma}
    \label{lemma:discrete_separability}
    Let $V=\mathbb{R}^d$ with its usual topology and let $\rho$ be a \emph{closed} equivalence relation on $V$. 
    For a family $\cF = \{f_n\}_{n\in\mathbb N}$ of continuous maps $f_n: V \to\mathbb{R}^m$ such that $\rho(\cF) = \rho$. Then  the set
    \[
    \mathcal A :=\bigcup_{n\ge 1}\Big\{\,A(f_1,\dots,f_n)|_K\ :\ A\in \cC \big((\mathbb{R}^m)^n,\mathbb{R}\big)\,\Big\}
    \]
    is dense in $\cC_\rho(V)$. Or equivalently, for every $h\in \cC_\rho(V)$ there exist $n_k\uparrow\infty$ and $A_{n_k}\in \cC((\mathbb{R}^m)^{n_k},\mathbb{R})$ such that $A_{n_k}(f_1,\dots,f_{n_k})\to h$.
\end{lemma}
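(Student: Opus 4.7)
The plan is to reduce the problem, on each compact set, to a Stone--Weierstrass argument on the quotient by $\rho$. Since the topology on $\cC_\rho(V)$ is uniform convergence on compact sets and $V = \R^d$ is $\sigma$-compact, it is enough to show that for every compact $K \subseteq V$, every $h \in \cC_\rho(V)$, and every $\epsilon > 0$, there exist $n$ and $A \in \cC((\R^m)^n, \R)$ with $\sup_{x \in K}|A(f_1(x),\dots,f_n(x)) - h(x)| < \epsilon$. Once this is done, fix an exhaustion $K_1 \subseteq K_2 \subseteq \cdots$ with $\bigcup_k K_k = V$, pick $A_{n_k}$ approximating $h$ within $1/k$ on $K_k$, and observe that any compact subset of $V$ eventually lies in some $K_k$; this is the standard diagonal step that produces the sequence $A_{n_k}(f_1,\dots,f_{n_k}) \to h$ claimed in the equivalent reformulation.

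For the core step, fix $K$ and set $\rho_K := \rho \cap (K \times K)$. Since $\rho$ is closed in $V \times V$, $\rho_K$ is a closed equivalence relation on the compact Hausdorff space $K$, and by the standard fact that a quotient of a compact Hausdorff space by a closed equivalence relation is compact Hausdorff, the quotient $K/\rho_K$ is compact Hausdorff. Each $f_n$ respects $\rho$ by the assumption $\rho(\cF) = \rho$, hence so does each coordinate projection $f_n^{(j)} : V \to \R$ for $j = 1,\dots,m$; similarly for $h$. All of these functions therefore restrict to $K$ and descend to continuous functions on $K/\rho_K$.

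Now consider the real subalgebra $\mathcal B \subseteq \cC(K)$ generated by the constants together with $\{f_n^{(j)}|_K : n \in \N,\ j = 1,\dots,m\}$, and its descent $\tilde{\mathcal B} \subseteq \cC(K/\rho_K)$. By construction $\tilde{\mathcal B}$ contains the constants. It also separates points of $K/\rho_K$: two distinct classes correspond to $\alpha,\beta \in K$ with $(\alpha,\beta) \notin \rho$, so by $\rho(\cF) = \rho$ some $f_n$ satisfies $f_n(\alpha) \ne f_n(\beta)$, hence some coordinate $f_n^{(j)}$ separates $[\alpha]$ from $[\beta]$. By Stone--Weierstrass on the compact Hausdorff space $K/\rho_K$, $\tilde{\mathcal B}$ is dense in $\cC(K/\rho_K)$, and pulling back this gives density of $\mathcal B$ in $\cC_{\rho_K}(K) := \{g \in \cC(K) : g \text{ respects } \rho_K\}$. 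Since $h|_K \in \cC_{\rho_K}(K)$, we can approximate it within $\epsilon$ by some polynomial combination $P(f_1^{(1)}|_K,\dots,f_n^{(m)}|_K)$, which has the form $A(f_1,\dots,f_n)|_K$ for the continuous map $A(y_1,\dots,y_n) := P(y_1^{(1)},\dots,y_n^{(m)})$, as required.

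The only delicate points are (i) the Hausdorffness of $K/\rho_K$, which is precisely where the hypothesis that $\rho$ is \emph{closed} enters, and (ii) the verification that the descended algebra separates points of the quotient, which is precisely where $\rho(\cF) = \rho$ enters. Neither is difficult, but both are essential; the rest of the argument is Stone--Weierstrass plus diagonalization.
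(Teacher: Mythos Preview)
Your proof is correct and follows essentially the same route as the paper: restrict to a compact $K$, pass to the quotient by $\rho$ (the paper realizes this quotient concretely as the image $\widehat F(K)\subset(\R^m)^{\mathbb N}$ of the map $\widehat F=(f_n)_n$, while you work with the abstract quotient $K/\rho_K$; these are homeomorphic), and apply Stone--Weierstrass. One small remark: the closedness hypothesis on $\rho$ is actually redundant, since $\rho(\cF)=\rho$ with each $f_n$ continuous already forces $\rho=\bigcap_n\{(x,y):f_n(x)=f_n(y)\}$ to be closed; the paper's concrete-image argument makes this automatic, whereas you invoke the hypothesis explicitly.
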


\begin{proof}[Proof of Lemma~\ref{lemma:discrete_separability}]
For $x\in V$ set $\widehat F(x):= (f_n(x))_{n \in \N}$. Fix a compact $K\subset V$.
Since each $f_n$ is continuous, $\widehat F(K)$ is compact in the product $V^{\mathbb N}$. Note that $\rho=\{(x,y)\in V^2:\ \widehat F(x) = \widehat F(y)\}$, so that the map $\phi:K/\rho\to \widehat F(K)$ defined by $\phi([x]):=\widehat F(x)$ is well defined. Furthermore $\phi$ is continuous and bijective, and since $K/\rho$ is compact Hausdorff and $\widehat F(K)$ is Hausdorff, $\phi$ is a homeomorphism.
Hence every $h\in \cC_\rho(K)$ factors uniquely as
\[
h=H\circ \widehat F|_K\qquad\text{for a unique }H\in \cC(\widehat F(K),\mathbb R).
\]

Let $\pi_n:(\mathbb{R}^m)^{\mathbb N}\to (\mathbb{R}^m)^n$ be the projection onto the first $n$ coordinates.
Note that
\[
\mathcal A =\bigcup_{n\ge 1}\ \Big\{\,A\circ \pi_n\big|_{\widehat F(K)}\ :\ A\in \cC\!\big((\mathbb{R}^m)^n,\mathbb{R}\big)\,\Big\}.
\]
Then $\mathcal A$ is a sub-algebra of $\cC(\widehat F(K))$ containing constants. We next prove that $\mathcal A$ separates points. Indeed, if $y,y'\in \widehat F(K)$ with $y\neq y'$, then there exists $j$ with $y_j\neq y'_j$; define the continuous scalar function $p:\mathbb{R}^m\to\mathbb R$ such as $p(u)=\langle u,\,y_j-y'_j\rangle$, and note that $p(y_j)\neq p(y'_j)$ and choose $A\in \cC((\mathbb R^m)^n)$ given by $A(z_1,\dots,z_j,\dots):=p(z_j)$. This function $A$ lies in $\mathcal A$ and satisfies $(A\circ\pi_j)(y)\neq (A\circ\pi_j)(y')$ as desired. Now we may use the Stone--Weierstrass theorem, which gives that $\overline{\mathcal A}=\cC(\widehat F(K))$ in the uniform norm, concluding the proof.
\end{proof}

\rev{
    \section{Universality of Equivariant Neural Networks}
}

\subsection{Entry-wise Separation}
\label{section:transitive}

In this section, we study equivariant functions by reducing the problem to the analysis of particular invariant functions, thereby extending the previous results.
The tools used for this reduction are suitable projections onto output coordinates, together with reconstruction maps that allow us to recover the entire function from a single projection.
For the sake of presentation, we begin by considering the case where $G$ acts transitively on $X$. 
Let $x \in X$, and let $G_x$ denote the stabilizer of $x$.
Let $\pi_x: \R^X \to \R$ be the linear projection on the $x$-th coordinate in $\R^X$.
We obtain the pushforward map of $\pi_x$, defined as
\[
    \pi_{x*}:
    \begin{array}{cc}
         \cC_G(V, \R^X) \to \cC_{G_x}(V) \\
         f \mapsto \pi_x \circ f.
    \end{array}
\]

Let $g_1, \dots, g_t$ be a transversal for $G/G_x$, that is, a choice of representatives for classes in $G/G_x$.
We define the \emph{reconstruction map} as
\[
    \theta_x^*: 
    \begin{array}{cc}
         \cC_{G_x}(V) \to \cC_G(V, \R^X) \\
         f \mapsto \left[ v \mapsto \sum_{i = 1}^t f (g_i^{-1}v) e_{g_i x} \right].
    \end{array}
\]

\begin{prop}
    \label{prop:equiv_charact}
    If the action of $G$ on $X$ is transitive, then reconstruction map $\theta_x^*$ is a well-defined, continuous linear operator such that
    \begin{enumerate}[label=(\roman*), ref=(\roman*)]
        \item $\pi_{x*} \circ \theta_x^* = \id_{\cC_{G_x} (V)}$,
        \item $\theta_x^* \circ \pi_{x*} = \id_{\cC_{G} (V, \R^X)}$.
    \end{enumerate}
\end{prop}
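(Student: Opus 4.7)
The plan is to verify each of the four claims about $\theta_x^*$ in turn: well-definedness (including transversal independence and $G$-equivariance of the output), linearity and continuity, and the two identities (i) and (ii). The main structural idea is that by transitivity, the map $i \mapsto g_i x$ gives a bijection between $\{1,\dots,t\}$ and $X$, so $\{e_{g_i x}\}_{i=1}^t$ is the entire basis of $\R^X$, and that $G_x$-invariance of the input $f$ makes the sum insensitive to the choice of transversal.

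First, for well-definedness, I would check that $\theta_x^*(f)$ does not depend on the choice of transversal: if $g_i'$ is another representative, then $g_i' = g_i h_i$ with $h_i \in G_x$, so $g_i' x = g_i x$, and $f((g_i')^{-1} v) = f(h_i^{-1} g_i^{-1} v) = f(g_i^{-1} v)$ by $G_x$-invariance of $f$. Continuity and linearity are then immediate, since $\theta_x^*$ is a finite sum of continuous linear pullbacks. The step I expect to be the most delicate is showing $G$-equivariance of $\theta_x^*(f)$: for $g \in G$, writing $g g_i = g_{j(i)} h_i$ with $j$ a permutation of $\{1,\dots,t\}$ and $h_i \in G_x$, one has $g g_i x = g_{j(i)} x$ and $g_{j(i)}^{-1} g v = h_i g_i^{-1} v$; invariance under $h_i$ then yields $f(g_{j(i)}^{-1} g v) = f(g_i^{-1} v)$. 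Reindexing by $k = j(i)$ turns $g \cdot \theta_x^*(f)(v) = \sum_i f(g_i^{-1}v) e_{g\cdot g_i x}$ into $\sum_k f(g_k^{-1} g v) e_{g_k x} = \theta_x^*(f)(gv)$.

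For (i), I would fix the transversal so that $g_1 \in G_x$ represents the trivial coset; then $g_i x = x$ holds only for $i=1$, so $\pi_x(\theta_x^*(f)(v)) = f(g_1^{-1} v) = f(v)$ by $G_x$-invariance. Transversal independence proved above ensures the formula is unambiguous. For (ii), given $F \in \cC_G(V, \R^X)$, equivariance of $F$ gives $F(g_i^{-1}v) = g_i^{-1} F(v)$, and the coefficient of $e_x$ in $g_i^{-1} F(v)$ is the coefficient of $e_{g_i x}$ in $F(v)$. Hence
\[
    \theta_x^*(\pi_{x*} F)(v) = \sum_{i=1}^t F(v)_{g_i x} \, e_{g_i x} = F(v),
\]
where the last equality uses transitivity, which guarantees $\{g_1 x,\dots,g_t x\} = X$.

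The main obstacle is the $G$-equivariance verification of $\theta_x^*(f)$: it is the only step that requires simultaneously using the transversal structure and the $G_x$-invariance of $f$, and getting the indices right. Once that bookkeeping is set up, identities (i) and (ii) follow from short direct calculations, and transitivity of the $G$-action is what makes $\pi_{x*}$ and $\theta_x^*$ inverses rather than just retractions.
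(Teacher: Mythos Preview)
Your proposal is correct and follows essentially the same approach as the paper's proof: transversal independence via $G_x$-invariance of $f$, equivariance via the fact that $\{g g_i\}$ is again a transversal (you spell this out as an explicit reindexing $g g_i = g_{j(i)} h_i$, while the paper simply invokes transversal independence for the shifted transversal, but this is the same computation), and identities (i) and (ii) via the choice $g_1 = e$ and the equivariance of $F$ combined with $\{g_i x\}_i = X$. The only cosmetic difference is that the paper phrases the equivariance step as a one-line appeal to the already-proved transversal independence rather than tracking the permutation $j$ explicitly.
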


\begin{proof}
    Choosing a different representative for each $g_i$ means choosing an element $g_i \cdot h$ for an arbitrary $h \in G_x$.
    For $f\in \cC_{G_x}(V)$, the $G_x$-invariance of $f$ implies
    \[
        f((g_i \cdot h)^{-1} v) = f(h^{-1} \cdot g_i^{-1} v) = f (g_i^{-1} v).
    \]
    Then $e_{g_i h x} = e_{g_i x}$ since $h \in G_x$.
    As a consequence, the choice of representatives $g_1, \dots, g_t$ does not affect $\theta^*_x(f)$.
    Next, we prove that $\theta_x^*(f)$ is $G$-equivariant: indeed, for $g\in G$ we have 
    \begin{align*}
        \theta_x^*(f)(gv)
        &=
        \sum_{i = 1}^t f\big(g_i^{-1} g v\big)\, e_{g_i x}
        \\
        &=
        \sum_{i = 1}^t f\big(g_i^{-1} v\big)\, e_{g^{-1} g_i x}
        \\
        &=
        g \cdot \sum_{i = 1}^t f\big(g_i^{-1} v\big)\, e_{g_i x}
        \\
        &=
        g \cdot \theta_x^*(f)(v).
    \end{align*}
    where in the second equality we use the fact that $g^{-1} g_1, \dots, g^{-1} g_t$ is another transversal for $G/G_x$.
    These observations prove that $\theta^*_x$ is well-defined.
    It is continuous and linear since it is the composition of continuous and linear functions.
    We can choose $g_1 = e$, in which case the $x$-th coefficient in $ \theta^*_x (f)(v)$ is simply $f(v)$, proving (i).
    To prove (ii), notice that for $x\in X$, the set $g_1,\dots,g_t$ is a transversal of $G/G_x$ if and only if $g_1x, \dots, g_tx$ is the $G$-orbit of $x$. Thus for each $f \in \cC(V, \R^X)$ we can write
    \begin{equation}
        \label{eq:eq_coordinates}
        f(v) = \sum_{i = 1}^t \pi_{g_i x} f(v) e_{g_i x}.
    \end{equation}
    Now for $f\in \cC_G(V,\mathbb R^X)$, we can conclude (ii) as follows:
    \begin{align*}
        \theta_x^* \pi_{x^*} f(v)
        &=
        \sum_{i = 1}^t \pi_x f(g_i^{-1} v)\, e_{g_i x}
        \\
        &=
        \sum_{i = 1}^t \pi_x\big(g_i^{-1} \cdot f(v)\big)\, e_{g_i x}
        \\
        &=
        \sum_{i = 1}^t \pi_{g_i x} f(v)\, e_{g_i x}
        \\
        &\stackrel{\text{Equation~\ref{eq:eq_coordinates}}}{=}
        f(v).
    \end{align*}

\end{proof}

Proposition~\ref{prop:equiv_charact} says that $\pi_{x^*}$ is a linear homeomorphism, hence, a function class $\mathcal N$ is dense in $\cC_G(V, \R^X)$ if and only if $\pi_{x^*}(\mathcal N)$ is.
This means that we can restrict ourselves to the study of function families of type $\pi_{x^*}(\mathcal N)$, which are similar to the study conducted in Section~\ref{section:universality_classes}.

\begin{proof}[Proof of Proposition~\ref{prop:separation-projection}]
    The claim follows directly from Proposition~\ref{prop:equiv_charact}: 
    applying $\pi_x$ yields one inclusion, while the reconstruction map yields the other.  
\end{proof}

\begin{remark}[Linear case]
\label{rmk:linear-case}
    In particular, in the affine case we obtain,
    \[
        \pi_{x*}:
            \begin{array}{cc}
                 \Aff_G(V, \R^X) \to \Aff_{G_x}(V, \R) \\
                 f \mapsto \pi_x \circ f.
            \end{array}
    \]
    Note that characterizing $\Aff_{G_x}(V, \R)$ reduces to computing $V^{G_x}$.
    If $V = \R^Y$ for a finite $G$-set $Y$, then we just need to compute the orbits of $G_x$ on $Y$.
\end{remark}
The previous observations extend, with minor modifications, to the non-transitive case, which we address next.
Let $X = Y_1 \sqcup \cdots \sqcup Y_s$ be the decomposition of $X$ into $G$-orbits.
For each $i = 1, \dots, s$, denote by $\pi_{\R^{Y_i}} : \R^X \to \R^{Y_i}$ the standard projection.
Consider the maps
\[
    \Phi :
    \begin{array}{c}
        \cC_G(V, \R^X)
        \to
        \cC_G(V, \R^{Y_1}) \oplus \cdots \oplus \cC_G(V, \R^{Y_s})
        \\
        f \mapsto (\pi_{\R^{Y_1}} f, \dots, \pi_{\R^{Y_s}} f),
    \end{array}
\]
and
\[
    \Psi :
    \begin{array}{c}
        \cC_G(V, \R^{Y_1}) \oplus \cdots \oplus \cC_G(V, \R^{Y_s})
        \to
        \cC_G(V, \R^X)
        \\
        (f_1, \dots, f_s) \mapsto \big[x \mapsto f_1(x) + \dots + f_s(x)\big].
    \end{array}
\]
Note that $\Phi$ is a homeomorphism onto its image.
Let $x_1, \dots, x_s$ be points with $x_i \in Y_i$.
Then, for each $i = 1, \dots, s$, we have
\[
    \pi_{x_i^*}\, \cC_G(V, \R^X)
    =
    \pi_{x_i^*}\, \cC_G(V, \R^{Y_i}).
\]
Moreover, for each $i = 1, \dots, s$,
\[
    \cC_G(V, \R^{Y_i})
    =
    \theta^*_{x_i}\, \pi_{x_i^*}\, \cC_G(V, \R^{Y_i})
    =
    \cC_{G_{x_i}}(V, \R).
\]
Therefore, understanding $\cC_G(V, \R^X)$ reduces to understanding $\cC_{G_{x_i}}(V, \R)$ for each $i = 1, \dots, s$, and then reconstructing $\cC_G(V, \R^X)$ via $\theta^*_{x_1}, \dots, \theta^*_{x_s}$ and $\Psi$.
In particular, since we focus on closed linear subspaces $\mathcal U \subseteq \cC_G(V, \R^X)$, it is enough to study the subspaces $\pi_{x_i^*}\mathcal U$ for $i = 1, \dots, s$, namely universality classes of invariant neural networks.

\begin{prop}
    \label{prop:univ-class-1conv}
    The following equality is true:
    \begin{equation}
            \mathcal U_\sigma (\underbrace{C,\dots, C}_{d \text{ times}}) = \{ (x_1, \dots, x_n) \mapsto (f(x_1), \dots, f(x_n)) \mid f \in \cC(\R) \} \subsetneq \cC_{S_n}(\R^n, \R^n).
    \end{equation}
    
\end{prop}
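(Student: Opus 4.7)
The plan is to show that every element of $\cN(C^{1\times h_1},\dots,C^{h_{d-1}\times 1})$ acts \emph{pointwise} on the $n$ coordinates, i.e.\ has the form $(x_1,\dots,x_n)\mapsto(g(x_1),\dots,g(x_n))$ for some standard scalar MLP $g\colon\R\to\R$, and then to take closures via the classical universal approximation theorem.

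First I would unpack $C^{k\times h}$ via (\ref{eq:def_m_h}). Since $C=\{v\mapsto xv+y\mathbbm{1}\}$, a map $\phi\in C^{k\times h}$ sends $V\in\R^{n\times k}$ to the matrix with entries $\phi(V)_{p,j}=\sum_{i=1}^k x_{ji}V_{p,i}+\sum_{i=1}^k y_{ji}$. The value at position $p$ depends only on $(V_{p,1},\dots,V_{p,k})$, and because the parameters $x_{ji},y_{ji}$ range freely over $\R$, this is an arbitrary affine map $\R^k\to\R^h$ applied independently at each of the $n$ positions (the $h$ per-channel biases are realised as $b_j=\sum_i y_{ji}$). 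Composition with the pointwise activation $\sigma$ preserves this structure, so every element of $\cN(C^{1\times h_1},\dots,C^{h_{d-1}\times 1})$ has the form $(x_i)_{i=1}^n\mapsto(g(x_i))_{i=1}^n$ for some standard $d$-layer scalar MLP $g$ with widths $h_1,\dots,h_{d-1}$ and activation $\sigma$.

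Next I would pass to the closure. By \cite{pinkus_approximation_1999}, scalar MLPs are dense in $\cC(\R)$ in the topology of uniform convergence on compacts; since the coordinate projections of a compact $K\subset\R^n$ lie in a common compact of $\R$, any approximating sequence $g_m\to f$ in $\cC(\R)$ yields $(g_m(x_i))_i\to(f(x_i))_i$ uniformly on $K$. For the converse, I would restrict an arbitrary uniform-on-compacts limit of networks $F_m(x)=(g_m(x_i))_i$ to inputs of the form $(t,c_2,\dots,c_n)$ with $c_2,\dots,c_n$ fixed; this forces $g_m$ to converge locally uniformly to a continuous scalar $f$, independent of the $c_j$, and the limit is $(f(x_i))_i$. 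This establishes the equality in (\ref{eq:conv-univ-class}).

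Finally, for the strict inclusion I would exhibit a concrete counterexample: for $n\ge 2$, the map $F(x_1,\dots,x_n)=(s,\dots,s)$ with $s=\sum_{j=1}^n x_j$ is continuous and $S_n$-equivariant (indeed invariant in each output coordinate), yet its first coordinate depends on all inputs and so cannot be written as $f(x_1)$ for any $f\in\cC(\R)$. The main subtlety is the very first step: one must check carefully that the sum-over-input-channel structure of $C^{k\times h}$ collapses the biases coherently and that no layer introduces any mixing between the $n$ positions, so that the whole network truly decouples into a scalar MLP applied entrywise. Once this pointwise form is in hand, both the density claim and the strict inclusion are routine.
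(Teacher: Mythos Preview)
Your proposal is correct and follows essentially the same approach as the paper: both unpack $C^{k\times h}$ to show that every network in $\cN(C^{1\times h_1},\dots,C^{h_{d-1}\times 1})$ acts as a single scalar MLP applied entrywise to the $n$ coordinates, and then invoke the classical universal approximation theorem to pass to the closure. Your treatment is in fact more complete than the paper's, since you explicitly verify that the entrywise-function class is closed under uniform-on-compacts limits and exhibit a concrete counterexample for the strict inclusion, both of which the paper's proof leaves implicit.
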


\begin{proof}[Proof of Proposition~\ref{prop:univ-class-1conv}]
    
We start by considering the case $d = 2$ and then study the more general neural space $\cN(C^{1, h} , C^{h \times k})$.

Recall $\lambda(C) = \Sp \left\{ x \mapsto id_{\R^X} \cdot x \right\}$.
Elements in $C^{1, h}$ can be represented as affine maps $x \mapsto Bx + c$ where $B$ and $c$ have the following block representations
\[
B = 
\begin{bmatrix}
    b_{1} \id \\
    \vdots \\
    b_{h} \id\\
\end{bmatrix}
\quad
\text{ and }
\quad
c =
\begin{bmatrix}
    c_1 \mathbbm{1} \\
    \vdots \\
    c_h \mathbbm{1} \\
\end{bmatrix}.
\]
While elements in $C^{h, k}$ can be represented as affine maps $x \mapsto Ax + d$ where $d \in \R$ and
\[
A = 
\begin{bmatrix}
    a_{1, 1} \cdot \id_{\R^X} & \cdots & a_{1, h}\cdot \id_{\R^X} \\
    \vdots  & \vdots & \vdots \\
    a_{k, 1} \cdot \id_{\R^X} & \cdots & a_{h, h} \cdot \id_{\R^X}\\
\end{bmatrix} = \tilde A \otimes \id_{\R^X},
\]
where $\tilde A = [a_{i, j}] \in \R^{k \times h}$.

Given $i \in X$ and $s = 1, \dots, h$, we can write elements $\theta \in \cN(C^{1, h} , C^{h, k})$ as
    \[
    \theta_{s, i}(x) = A\sigma(Bx+c) = \sum_{j = 1}^h a_{s, j} \sigma \left(b_{j} x_i + c_j \right)
    \]
for some $a_i, b_{j}, c_j \in \R$.
But note that
    \begin{align}
        \label{eq:equiv_conv_w1}
        \theta_{s, i}(x) = 
        \sum_{j = 1}^h a_{s, j}  \sigma \left( b_{j} x_i + c_j \right) = \xi_s(x)
    \end{align}
where
\[
    \xi_s(y) := \sum_{j = 1}^h a_{s, j} \sigma \left( b_{j} y + c_j \right).
\]
That is, $\xi \in \cN(\R^{m}, \R^{h}, \R^k)$. 
In other words, taking the limit as $h \to \infty$ and setting $k = 1$, we obtain the proof of the theorem for the case $d = 2$. 
For $d > 2$, it suffices to note that the composition of spaces of the type $\cN(C^{1,h}, C^{h \times k})$ again yields elements of the same type. 
This concludes the proof.
\end{proof}

\subsection{Entry-wise Separation Constrained Universality}
\label{appendix:entry-wisesep-univ}

\begin{proof}[Proof of Theorem~\ref{th:pseudo-inv-univ}]
Recall that
\[
    C \subseteq \Aff_G(\R^X, \R^X).
\]
Thanks to Proposition~\ref{prop:equiv_charact}, to prove Theorem~\ref{th:pseudo-inv-univ} it suffices to show that, for each $x \in X$,
\[
    \pi_{x^*}\,\cU(M_1, \dots, M_f, C)
    =
    \cC_{\rho_x}(V),
\]
where
\[
    \rho_x
    :=
    \rho\big(\pi_{x^*}\,\cU(M_1, \dots, M_f, C)\big).
\]
Fix $x \in X$, and define $P_x := \pi_{x^*} C$.
Then
\begin{align*}
    \pi_{x^*}\,\cU(M_1, \dots, M_f, C)
    &=
    \pi_{x^*}\,\cU(M_1, \dots, M_f, \pi_{x^*} C)
    \\
    &=
    \cU(M_1, \dots, M_f, P_x)
    \\
    &=
    \cU(M_1, \dots, M_{f-1}, \pi_{x^*} M_f, L).
\end{align*}
For the second equality, note that
\begin{align*}
    P_x
    &=
    \pi_x^* C
    \\
    &=
    \{\, v \mapsto \pi_x(\lambda v + \mu \mathbbm{1}) \mid \lambda, \mu \in \R \,\}
    \\
    &=
    \{\, v \mapsto \lambda \pi_x(v) + \mu \mid \lambda, \mu \in \R \,\}.
\end{align*}
For the third equality, recall that the pointwise activation $\tilde\sigma$ is defined by
\[
    \tilde\sigma\Big(\sum_{x \in X} v_x e_x\Big)
    :=
    \sum_{x \in X} \sigma(v_x)\, e_x,
\]
and observe that, for each $x \in X$, we have the commutation relation
\[
    \sigma \circ \pi_x = \pi_x \circ \tilde\sigma.
\]
Hence, at the final activation we have
\[
    \{\, v \mapsto \lambda \pi_x(\tilde \sigma(v)) + \mu \mid \lambda, \mu \in \R \,\}
    =
    \{\, v \mapsto \lambda \sigma(\pi_x(v)) + \mu \mid \lambda, \mu \in \R \,\},
\]
and note that, in this way, the final layer becomes the space of arbitrary affine maps of the real line, namely $L$.
Finally, note that $M_f \subseteq \Aff_{G}(V, \R^X)$ for some permutation representation $V$.
Thus, by Remark~\ref{rmk:linear-case}, we have
\[
    \pi_{x^*} M_f \subseteq \Aff_{G_x}(V, \R),
\]
and therefore $\pi_{x^*} M_f$ is a space of $G_x$-invariant affine functions.
Thanks to Theorem~\ref{th:invariant_universality}, we obtain
\[
    \pi_{x^*}\,\cU(M_1, \dots, M_f, C)
    =
    \cU(M_1, \dots, M_{f-1}, \pi_{x^*} M_f, L)
    =
    \cC_{\rho_x}(V),
\]
where
\[
    \rho_x
    =
    \rho\big(\pi_{x^*}\,\cU(M_1, \dots, M_f, C)\big)
    =
    \rho\big(\cU(M_1, \dots, M_{f-1}, \pi_{x^*} M_f, L)\big).
\]
This concludes the proof.
\end{proof}
For brevity, we will prove all subsequent results in terms of 
$\cU(\underbrace{M, ..., M}_{d \text{ times}})$ or similar forms. 
The same results, however, extend verbatim to the more general setting 
$\cU(M_1, \dots, M_d, \underbrace{M, ..., M}_{d \text{ times}})$.

\begin{proof}[Proof of Theorem~\ref{th:equivariant_universality}]
Note that
\[
    \cU(\underbrace{M, \dots, M}_{d \text{ times}}, C)
    \subseteq
    \cU(\underbrace{M, \dots, M}_{d+1 \text{ times}}).
\]
Therefore,
\[
    \rho\big(\cU(\underbrace{M, \dots, M}_{d+1 \text{ times}})\big)
    \subseteq
    \rho\big(\cU(\underbrace{M, \dots, M}_{d \text{ times}}, C)\big)
    \subseteq
    \rho\big(\cU(\underbrace{M, \dots, M}_{d \text{ times}})\big).
\]
By hypothesis,
\[
    \rho
    :=
    \rho\big(\cU(\underbrace{M, \dots, M}_{d+1 \text{ times}})\big)
    =
    \rho\big(\cU(\underbrace{M, \dots, M}_{d \text{ times}})\big),
\]
and hence
\[
    \rho
    =
    \rho\big(\cU(\underbrace{M, \dots, M}_{d \text{ times}}, C)\big)
\]
as well.
Then, by Theorem~\ref{th:pseudo-inv-univ},
\[
    \cU(\underbrace{M, \dots, M}_{d \text{ times}}, C)
    =
    \cC_\rho(V).
\]
Since functions realized by neural networks are continuous, it follows that
\[
    \cU(\underbrace{M, \dots, M}_{d+1 \text{ times}})
    \subseteq
    \cC_\rho(V).
\]
Finally, we observe that
\[
    \cC_\rho(V)
    =
    \cU(\underbrace{M, \dots, M}_{d \text{ times}}, C)
    \subseteq
    \cU(\underbrace{M, \dots, M}_{d+1 \text{ times}})
    \subseteq
    \cC_\rho(V),
\]
which yields the claim.
\end{proof}

\begin{lemma}
    \label{lemma:technical_concat}
    Let $X, Y, Z$ be metric spaces, and let $\mathcal G \subseteq \cC(X, Y)$ and $\mathcal F \subseteq \cC(Y, Z)$.
    The following identities hold.
    \[
        \rho(\overline{\mathcal G} \ccat \mathcal F)
        =
        \rho(\mathcal G \ccat \mathcal F)
        =
        \rho(\mathcal G \ccat \overline{\mathcal F}),
    \]
    where the closure is taken with respect to uniform convergence on compact sets.
\end{lemma}

\begin{proof}
    For convenience, recall the definition
    \[
        \mathcal G \ccat \mathcal F := \{ f \circ g \mid f \in \mathcal F,\, g \in \mathcal G \}.
    \]
    We start by proving the first equality.
    Since $\mathcal G \ccat \mathcal F \subseteq \overline{\mathcal G} \ccat \mathcal F$, we have
    \[
        \rho(\overline{\mathcal G} \ccat \mathcal F) \subseteq \rho(\mathcal G \ccat \mathcal F).
    \]
    For the reverse inclusion, let $(x,y) \in \rho(\mathcal G \ccat \mathcal F)$, that is,
    \[
        f \circ g(x) = f \circ g(y)
        \qquad
        \text{for each } f \in \mathcal F \text{ and } g \in \mathcal G.
    \]
    Fix $f \in \mathcal F$ and let $g \in \overline{\mathcal G}$.
    By definition of the closure, there exists a sequence $(g_n)_{n \in \N} \subseteq \mathcal G$ such that $g_n \to g$ uniformly on compact sets.
    For each $n \in \N$,
    \[
        f(g_n(x)) = f(g_n(y)),
    \]
    and by continuity of $f$ we obtain $f(g(x)) = f(g(y))$.
    Since $f \in \mathcal F$ and $g \in \overline{\mathcal G}$ were arbitrary,
    \[
        (x,y) \in \rho(\overline{\mathcal G} \ccat \mathcal F),
    \]
    i.e.,
    \[
        \rho(\mathcal G \ccat \mathcal F) \subseteq \rho(\overline{\mathcal G} \ccat \mathcal F).
    \]
    This proves the first equality.

    We now prove the second equality.
    Since $\mathcal G \ccat \mathcal F \subseteq \mathcal G \ccat \overline{\mathcal F}$, we have
    \[
        \rho(\mathcal G \ccat \overline{\mathcal F}) \subseteq \rho(\mathcal G \ccat \mathcal F).
    \]
    For the reverse inclusion, let $(x,y) \in \rho(\mathcal G \ccat \mathcal F)$, that is,
    \[
        f \circ g(x) = f \circ g(y)
        \qquad
        \text{for each } f \in \mathcal F \text{ and } g \in \mathcal G.
    \]
    Fix $g \in \mathcal G$ and let $f \in \overline{\mathcal F}$.
    By definition of the closure, there exists a sequence $(f_n)_{n \in \N} \subseteq \mathcal F$ such that $f_n \to f$ uniformly on compact sets.
    For each $n \in \N$,
    \[
        f_n(g(x)) = f_n(g(y)),
    \]
    hence taking limits yields $f(g(x)) = f(g(y))$.
    Since $g \in \mathcal G$ and $f \in \overline{\mathcal F}$ were arbitrary,
    \[
        (x,y) \in \rho(\mathcal G \ccat \overline{\mathcal F}),
    \]
    i.e.,
    \[
        \rho(\mathcal G \ccat \mathcal F) \subseteq \rho(\mathcal G \ccat \overline{\mathcal F}).
    \]
    This concludes the proof.
\end{proof}

\begin{lemma}
    \label{lemma:composition_lin_layer}
    Let $M$ be a layer space. Then for each $k, h \in \N_{0}$ we have
    \[
        C^{k \times 1} \ccat P^{1 \times h} \subseteq P^{k \times h}.
    \]
\end{lemma}

\begin{proof}
    Let $\phi \in C^{k \times 1}$.
    Then $\phi$ can be written as
    \[
        \phi(x_1, \dots, x_k) = \phi_1(x_1) + \dots + \phi_k(x_k)
    \]
    for some $\phi_1, \dots, \phi_k \in C$. \\
    Let $\psi \in P^{1 \times h}$.
    Then $\psi$ can be written as
    \[
        \psi(x) = \big(\psi_1(x), \dots, \psi_h(x)\big)
    \]
    for some $\psi_1, \dots, \psi_h \in P$. \\
    The composition $\psi \circ \phi \in C^{k \times 1} \ccat M^{1 \times h}$ can be written
    \[
        \psi \circ \phi(x_1, \dots, x_k) = \big(\sum_{j = 1}^k \psi_i \circ \phi_j(x_j)\big)_{i, j}
    \]
    where $\psi_i \circ \phi_j \in M$ for each $i$ and $j$ since each $\phi_j$ is a scalar multiple of the identity plus a translational part.
\end{proof}

\begin{lemma}
    \label{lemma:stabilizer-orbits}
    The projection $\pi_i^* \cU(C, P, C)$ separates $\mathrm{Stab}_{S_{n}}(i)$-orbits in $\R^n$.
\end{lemma}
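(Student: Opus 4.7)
The plan is to unwind the composition in $\cN(C, P, C)$ coordinate by coordinate at position $i$ and thereby reduce the separation question to a classical shallow-network argument in $v_i$ combined with a DeepSets-type argument on the multiset of the remaining coordinates.

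First, I would write out any $f \in \cN(C^{1 \times h}, P^{h \times k}, C^{k \times 1})$ explicitly. Because $C$ is diagonal in position (its linear part is a scalar multiple of the identity), applying $C^{1 \times h}$ followed by $\sigma$ yields per-position features $\phi_j(v_p) := \sigma(x_j v_p + y_j)$ in channel $j$. The PointNet layer $P^{h \times k}$ mixes the per-position channel and its pooled version, so at position $i$ its output in channel $m$ is a linear combination of $\phi_j(v_i)$ and $\sum_l \phi_j(v_l)$; splitting the latter as $\phi_j(v_i) + \sum_{l \neq i} \phi_j(v_l)$ and reparametrizing the two PointNet coefficients decouples them into independently tunable weights on $\phi_j(v_i)$ and on $s_j(v) := \sum_{l \neq i} \phi_j(v_l)$. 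After the second $\sigma$ and the final width-$1$ convolution $C^{k \times 1}$, which reads off position $i$, one arrives at
\[
\pi_i^* f(v) \;=\; \sum_{m} \alpha_m \, \sigma\!\Bigl( \sum_j u_{m,j}\,\phi_j(v_i) + w_{m,j}\, s_j(v) + c_m \Bigr) + \beta,
\]
which is a shallow neural network acting on the $2h$-dimensional feature map $F_h(v) := (\phi_j(v_i),\, s_j(v))_{j=1}^h$.

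Next, I would invoke classical universal approximation (Prior Work~\ref{summary:pinkus}) on the outer shallow network: letting the outer weights and the width $k$ vary, every continuous function of $F_h$ is approximable uniformly on compacta by such $\pi_i^* f$. The remaining task is to show that the family of features $\{F_h\}_{h \in \N}$ separates $\Stab_{S_n}(i)$-orbits. Two inputs $v, v'$ lie in different such orbits iff either $v_i \neq v'_i$ or the multisets $\{v_l\}_{l \neq i}$ and $\{v'_l\}_{l\neq i}$ differ; the former is separated by any $\phi_j$ that is non-constant near $v_i$, while the latter is separated by the pooled statistics $s_j$ via the standard DeepSets argument of \citet{zaheer_deep_2017} and Prior Work~\ref{summary:segol}, using density of shallow $\sigma$-features in $\cC(\R)$.

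The main delicate point is the reparametrization step that decouples the per-coordinate contribution $\phi_j(v_i)$ from the pooled contribution $s_j(v)$; without it the pooled term would always carry a rigid $\phi_j(v_i)$ summand, obscuring the separability analysis at coordinate $i$. Once that step is in place, the lemma reduces cleanly to the composition of Pinkus-type universal approximation at the head with DeepSets-type multiset separation at the pooled features.
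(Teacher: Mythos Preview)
Your argument is correct and structurally mirrors the paper's: both unwind the composition at position $i$ to exhibit $\pi_i^* f$ as a shallow head applied to per-coordinate features $\phi_j(v_i)$ together with pooled statistics (your $s_j$, the paper's $\overline{\xi}_j$), and then argue that this feature family separates $\Stab_{S_n}(i)$-orbits. The difference lies in how that final separation is established. The paper takes a constructive route: it lets the inner width grow and chooses parameters so that $\xi_j$ approximates the monomial $t \mapsto t^j$, whence the pooled features become the power sums $p_1,\dots,p_n$, and concludes via the classical fact that $(v_i, p_1(v),\dots,p_n(v))$ determines the $\Stab_{S_n}(i)$-orbit. You instead argue abstractly, using density of the one-neuron family $\{\sigma(x\cdot+y)\}$ in $\cC(\R)$ to get multiset separation of the $s_j$ directly, in the DeepSets style. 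Your reparametrization decoupling $\phi_j(v_i)$ from $s_j$ plays the same role as the paper's choice to set the $\zeta_2$ branch to zero (i.e.\ to read off only the $e_i$-component of the projected PointNet head). Both are valid; the paper's version is more explicit, yours more economical. One minor remark: since the lemma only asks for separation, your detour through full universal approximation of the outer head is stronger than needed---it suffices that $F_h$ itself separates and that the outer $\sigma$ is non-constant.
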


\begin{proof}
Without loss of generality, assume $i = 1$, and note that
\[
    \pi_1^* \cU(C, P, C) = \cU(C, P, P_1),
\]
where $P_1 := \pi_1^* C$.
By Lemma~\ref{lemma:technical_concat}, we obtain
\[
    \bigcup_{k \in \N} \cN(C^{1 \times k}, C^{k \times 1})
    \ccat
    \bigcup_{h \in \N} \cN(P^{1 \times h}, P_1^{h \times 1})
    \subseteq
    \bigcup_{k, h \in \N} \cN(C^{1 \times k}, P^{k \times h}, P_1^{h \times 1}).
\]
Then, by Lemma~\ref{lemma:composition_lin_layer}, we have
\begin{align*}
    \rho\big(\cU(C, P, P_1)\big)
    &=
    \rho\Big(
        \bigcup_{k, h \in \N} \cN(C^{1 \times k}, P^{k \times h}, P_1^{h \times 1})
    \Big)
    \\
    &\subseteq
    \rho\Big(
        \bigcup_{k \in \N} \cN(C^{1 \times k}, C^{k \times 1})
        \ccat
        \bigcup_{h \in \N} \cN(P^{1 \times h}, P_1^{h \times 1})
    \Big)
    \\
    &=
    \rho\big(\cU(C, C) \ccat \cU(P, P_1)\big).
\end{align*}
Since functions in $\cU(C, P, P_1)$ are $\Stab_{S_n}(1)$-invariant, they can at most separate $\Stab_{S_n}(1)$-orbits.
Therefore, it suffices to show that $\cU(C, C) \ccat \cU(P, P_1)$ separates these orbits.
We know by Proposition~\ref{prop:univ-class-1conv} that
\[
    \cU(C, C)
    =
    \Big\{ (x_1, \dots, x_n) \mapsto (f(x_1), \dots, f(x_n)) \,\Big|\, f \in \cC(\R) \Big\}.
\]
We now characterize elements in $\cU(P, P_1)$.
Before proceeding, recall that
\[
    C = \{\, x \mapsto \lambda x + \mu \mathbbm{1} \mid \lambda, \mu \in \R \,\}.
\]
Then
\[
    P_1
    =
    \pi_1^* C
    =
    \{\, x \mapsto \pi_1(\lambda x + \mu \mathbbm{1}) \mid \lambda, \mu \in \R \,\}
    =
    \{\, x \mapsto \lambda e_1^\top x + \mu \mid \lambda, \mu \in \R \,\}.
\]
Elements in $\cU(P, P_1)$ are limits of functions of the form
\[
    \eta(x) = A \circ \tilde\sigma \circ B(x),
\]
where
\[
    B
    =
    \begin{bmatrix}
        b_{1,1}\,\id + b_{1,2}\,\mathbbm{1}^\top \mathbbm{1}
        \\
        \vdots
        \\
        b_{h,1}\,\id + b_{h,2}\,\mathbbm{1}^\top \mathbbm{1}
    \end{bmatrix},
    \qquad
    A
    =
    \begin{bmatrix}
        a_1 \cdot e_1^\top & \cdots & a_h \cdot e_h^\top
    \end{bmatrix},
\]
for arbitrary $h \ge 1$.
Then, for $x=(x_1,\dots,x_n)$,
\[
    \eta(x)
    =
    \sum_{r=1}^h a_r \,\sigma\!\Big(b_{r,1} x_1 + b_{r,2}(x_1 + \dots + x_n)\Big)
    =
    \zeta\!\big(x_1, x_1 + \dots + x_n\big),
\]
where $\zeta \in \cN(L^{2 \times h}, L^{h \times 1})$.
Hence,
\[
    \cU(P, P_1)
    =
    \Big\{ (x_1, \dots, x_n) \mapsto f\!\big(x_1, x_1 + \dots + x_n\big) \,\Big|\, f \in \cC(\R^2) \Big\}.
\]

Therefore,
\begin{align*}
    \mathcal U
    \coloneqq
    \cU(C, C) \ccat \cU(P, P_1)
    =
    \Big\{ (x_1, \dots, x_n)
    &\mapsto
    f\!\big(g(x_1),\, g(x_1) + \dots + g(x_n)\big) \\
    &\Big|\, f \in \cC(\R^2),\ g \in \cC(\R)
    \Big\}.
\end{align*}

Note that the family
\[
    \Big\{ (x_1, \dots, x_n) \mapsto g(x_1) + \dots + g(x_n) \,\Big|\, g \in \cC(\R) \Big\}
\]
separates $x=(x_1,\dots,x_n)$ and $y=(y_1,\dots,y_n)$ if and only if there exists a permutation $\gamma \in S_n$ such that $x=\gamma y$.
Moreover, $\mathcal U$ separates $x$ and $y$ whenever $x_1 \neq y_1$, for instance by choosing $g=\id$ and $f(u,v)=u$.
Thus, $\mathcal U$ separates $x$ and $y$ if and only if there exists a permutation $\gamma \in \Stab_{S_n}(1)$ such that $x=\gamma y$.
This concludes the proof.

\end{proof}
\end{document}